\theoremstyle{definition}
\newtheorem{definition}{Definition}[section]
\newcommand*{\addFileDependency}[1]{% argument=file name and extension
  \typeout{(#1)}
  \@addtofilelist{#1}
  \IfFileExists{#1}{}{\typeout{No file #1.}}
}
\algnewcommand{\IIf}[1]{\State\algorithmicif\ #1\ \algorithmicthen}
\algnewcommand{\EndIIf}{\unskip\ \algorithmicend\ \algorithmicif}
\DeclareMathOperator*{\argmin}{arg\,min}
\DeclareMathOperator*{\argmax}{arg\,max}
\newtheorem{theorem}{Theorem}
\title{Explaining a Series of Models by Propagating Shapley Values}
\author[1]{Hugh Chen}
\author[2]{Scott M. Lundberg}
\author[1,*]{Su-In Lee}
\affil[1]{{\small Paul G. Allen School of Computer Science and Engineering, University of Washington}}
\affil[2]{{\small Microsoft Research}}
\affil[*]{{\small Corresponding: suinlee@cs.washington.edu}}
\begin{document}

\setcounter{page}{1}

\date{}

{\setstretch{1}
\maketitle
}

\begin{abstract}
Local feature attribution methods are increasingly used to explain complex machine learning models.  However, current methods are limited because they are extremely expensive to compute or are not capable of explaining a distributed series of models where each model is owned by a separate institution.  The latter is particularly important because it often arises in finance where explanations are mandated.  Here, we present DeepSHAP, a tractable method to propagate local feature attributions through complex series of models based on a connection to the Shapley value.  We evaluate DeepSHAP across biological, health, and financial datasets to show that it provides equally salient explanations an order of magnitude faster than existing model-agnostic attribution techniques and demonstrate its use in an important distributed series of models setting.
\end{abstract}

\section{Introduction}
\label{sec:introduction}

With the widespread adoption of machine learning (ML), \textit{series of models} (i.e., where the outputs of predictive models are used as inputs to separate predictive models) are increasingly common.  Examples include: (1) \textit{stacked generalization}, a widely used technique \cite{wang2006using,healey2018mapping,bhatt2017improved,doumpos2007model,ottokaggle} to improve generalization performance by ensembling the predictions of many models (called base-learners) using another model (called a meta-learner) \cite{wolpert1992stacked}, (2) \textit{neural network feature extraction}, where models are trained on features extracted using neural networks \cite{guo1992classification,chen2016deep}, typically for structured data  \cite{xu2014deep,liang2017text,jahankhani2006eeg}, and (3) \textit{consumer scores}, where predictive models that describe a specific behavior (e.g., credit scores \cite{dixon2014scoring}) are used as inputs to downstream predictive models.  For example, a bank may use a model to predict customers’ loan eligibility on the basis of their bank statements and their credit score, which itself is often a predictive model \cite{fay_2020}.

Explaining a series of models is crucial for debugging and building trust, even more so because a series of models is inherently harder to explain compared to a single model.  One popular paradigm for explaining models are \textit{local feature attributions}, which explain why a model makes a prediction for a single sample (known as the ``explicand'' \cite{sundararajan2020themanyshapleyvalues}).  Existing \textit{model-agnostic} local feature attribution methods (e.g., IME \cite{strumbelj2010efficient}, LIME \cite{ribeiro2016should}, KernelSHAP \cite{lundberg2017unified}) work regardless of the specific model being explained. They can explain a series of models, but suffer from two distinct shortcomings: (1) their sampling-based estimates of feature importance are inherently variable, and (2) they have high computational cost which may not be tractable for large pipelines.  Alternatively, \textit{model-specific} local feature attribution methods (i.e. attribution methods that work for specific types of models) are often much faster than model-agnostic approaches, but generally cannot be used to explain a series of models.  Examples include those for (1) deep models (e.g., DeepLIFT \cite{deeplift}, Integrated Gradients \cite{sundararajan2017axiomatic}) and (2) tree models (e.g., Gain/Gini Importance \cite{breiman1984classification}, TreeSHAP \cite{treeshap}).

In this paper, we present DeepSHAP -- a local feature attribution method that is \textit{faster than model-agnostic methods} and \textit{can explain complex series of models that pre-existing model-specific methods cannot}.  DeepSHAP is based on connections to the Shapley value, a concept from game theory that satisfies many desirable axioms.  
% We focus on the axiom of \textit{efficiency}, which guarantees that local feature attributions sum up to a desirable value (e.g., the model output on the explicand minus the average model output across a set of baselines).  
We make several important contributions: 
\begin{enumerate}
    \item We propose a theoretical framework (Methods Section \ref{sec:methods:deeplift_shapley}) that connects the rules introduced in \citeauthor{deeplift} to the Shapley value with an interventional conditional expectation set function\footnote{with a flat causal graph} (ICE Shapley value) (Methods Section \ref{sec:methods:shapley}).
    % We explicitly describe the rules introduced in \cite{deeplift} as approximations to Shapley values based on an interventional conditional expectation (Figure \ref{fig:concept}a, Methods Section \ref{sec:methods:shapley}).  In particular, they can be seen as a \textit{k-partition} approximation to Shapley values for a non-linearity applied to a linear function that partition input features into $k$ disjoint sets, compute the Shapley values exactly and then propagate the attributions linearly within each set (Methods Section \ref{sec:methods:deeplift_shapley}).
    \item We show that the ICE Shapley value decomposes into an average over ``single baseline attributions''\footnote{\cite{merrick2020explanation} show an analogous result for an ``input distribution'' under an observational conditional expectation lift.} (Methods Section \ref{sec:methods:baseline}), where a single baseline attribution explains the model for a single sample (explicand) by comparing to a single sample (baseline).
    \item We propose a \textit{generalized rescale rule} to explain a complex series of models by propagating attributions while enforcing efficiency at each layer (Figure \ref{fig:concept}b, Methods Section \ref{sec:methods:series_of_models}).  This framework extends DeepSHAP to explain any series of models composed of linear, deep, and tree models.  
    % More generally, this framework will satisfy efficiency if the explanations for each model in the series satisfies efficiency.
    % In this paper our feature attributions are all based on approximating or exactly computing Shapley values based on an interventional conditional expectation \cite{janzing2019feature}.
    \item We propose a \textit{group rescale rule} to propagate local feature attributions to groups of features (Methods Section \ref{sec:methods:groups}).  We show that these group attributions better explain models with many features.  
    % Although the feature attribution for a group could be a summation over features within a group, if groups overlap or do not cover the full set of features, efficiency will not be satisfied.  
\end{enumerate}

% In order to incorporate a baseline distribution in a principled way, we view DeepSHAP attributions as an approximation to Shapley values (\hugh{Methods section}).  For Shapley values with an interventional conditional expectation we show that baseline distributions can be reduced to an average over single baseline games, where a single explicand (sample being explained) is compared to a single baseline (sample used as baseline) (\hugh{Methods section}).  First, we qualitatively demonstrate the downside of using a single baseline in an image data set as in DeepLIFT (Section \ref{sec:cifar_mult_ref}).  Then, we utilize an epidemiological data set to show that the baseline distribution is actually an important parameter that determines the scientific question implicit in the feature attributions (Section \ref{sec:nhanes_mult_ref}).

Many feature attribution methods must define the absence of a feature, often by masking features according to a single baseline sample (single baseline attribution) \cite{deeplift,sundararajan2017axiomatic,sundararajan2020themanyshapleyvalues}. In contrast, we show that under certain assumptions, the correct approach is to use many baseline samples instead (Appendix Section \ref{sec:supp:int_baseline_dist_proof}).  Qualitatively, we show that using many baselines avoids bias that can be introduced by single baseline attributions (Section \ref{sec:cifar_mult_ref}).  Additionally, we show that the choice of baseline samples is a useful parameter which changes the question answered by the attributions (Figure \ref{fig:concept}c, Section \ref{sec:nhanes_mult_ref}).

We qualitatively and quantitatively evaluate DeepSHAP in real-world datasets including biological, health, image, and financial data sets.  In the biological datasets \cite{a2012overview,bennett2018religious,curtis2012genomic,pereira2016somatic}, we qualitatively assess group feature attributions based on gene sets identified in prior literature (Section \ref{sec:pathway}).  In the health, image, and financial datasets \cite{cox1998plan,lecun1998mnist,heloc}, we quantitively show that DeepSHAP provides useful explanations and is drastically faster than model agnostic approaches using an ablation test, where we hide features according to their attribution values (Sections \ref{sec:loss}, \ref{sec:feature_extraction}, \ref{sec:stacked_generalization}).

% , we explain four important examples of series of models (Figure \ref{fig:concept}d).  (1) We estimate group (gene set) feature attributions for models trained to classify (i) a binary Alzheimer's phenotype based on 15,128 genes across 651 samples \cite{a2012overview,bennett2018religious} and (ii) a binary breast cancer phenotype (whether histologic grade exceeds or equals three) based on 24,368 genes across 2,199 samples \cite{curtis2012genomic,pereira2016somatic} (Section \ref{sec:pathway}).  
% (2) We show that DeepSHAP enables explanations of the model's loss rather than its output and conveys information useful for debugging a five year mortality prediction model trained with 153 epidemiological features across 35,854 samples (NHANES 1999-2014),  (Section \ref{sec:loss}).  Furthermore, we show that compared to model-agnostic approaches, DeepSHAP is orders of magnitude faster and is equally explanatory in terms of an ablation test (Section \ref{sec:loss}).  (3)  We highlight the improved speed and comparable utility of DeepSHAP attributions relative to model-agnostic approaches for explaining an MNIST classification GBT model with features extracted from a CNN (70,000 28 by 28 pixel digit images in MNIST \cite{lecun1998mnist}) (Section \ref{sec:feature_extraction}).  Finally,We qualitatively and quantitatively validate a simulated distributed model setting where consumer score base models are fed into a meta model that predicts home equity line of credit (HELOC) application risk (Section \ref{sec:stacked_generalization}). 

In addition, DeepSHAP is the only approach we are aware of that enables explanations of a distributed series of models (where each model belongs to a separate institution).  Model-agnostic approaches do not work because they need access to every model in the series, but institutions cannot share models because they are proprietary.  One extremely prevalent example of distributed models are \textit{consumer scores} which exist for nearly every American consumer \cite{dixon2014scoring} (Section \ref{sec:stacked_generalization}).  In this setting, transparency is a critical issue, because opaque scores can hide discrimination or unfair practices.
% DeepSHAP is the only method we are aware of which can explain consumer scores in a distributed manner.

% In the ``distributed'' (i.e., models exist in independent financial entities) consumer scoring example, DeepSHAP is the only applicable explanation technique of which we are aware.  Model-agnostic explanation methods requires a single institution to have access to all models in the series, which may be a significant obstacle to regulatory demands for transparency \cite{dixon2014scoring}.  Instead, DeepSHAP enables attributions to the original features used by the consumer scoring models while allowing institutions to keep their proprietary models private.  Enabling model explanations in this distributed setting is paramount given the prevalence of consumer scores and the inherent opacity of models distributed across independent entities.  As evidence of the extensive use of consumer scores, a 2014 Federal Trade Commision (FTC) study found that a single data broker had thousands of data segments on nearly every consumer in the United States \cite{schmitz2014secret}.  DeepSHAP provides a natural way to provide explanations in this widespread, real world scenario.

% Finally, in the NHANES 1999-2014 data set we explain five bagged MLP models whose predictions are fed into a meta-model, showcasing the utility of obtaining explanations at each layer of the series of models (Section \ref{sec:stacked_generalization}).

% \hugh{Describe dependence/summary plot - maybe in context of explaining concept figure?}

\begin{figure}[!ht]
\includegraphics[width=\textwidth]{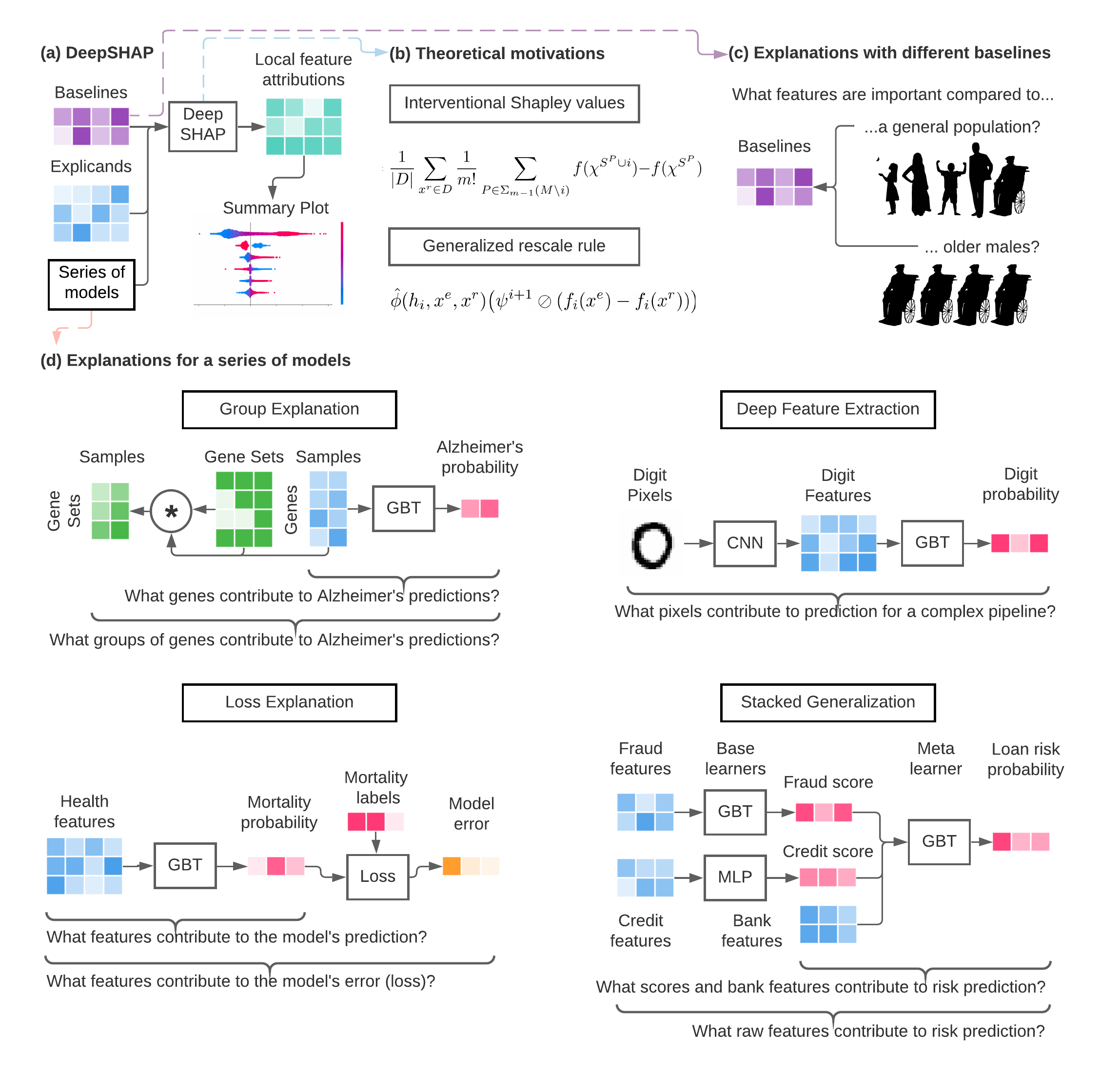}
\centering
\caption{\textbf{DeepSHAP estimates Shapley value feature attributions to explain a series of models using a baseline distribution.}  (a) Local feature attributions with DeepSHAP require explicands (samples being explained), a baseline distribution (samples being compared to), and a model that is comprised of a series of models.  They can be visualized to understand model behavior (Appendix Section \ref{sec:supp:feat_attr_plots}).  (b) Theoretical motivation behind DeepSHAP (Methods Sections \ref{sec:methods:shapley} and \ref{sec:methods:series_of_models}).  (c) The baseline distribution is an important, but often overlooked, parameter that changes the scientific question implicit in the local feature attributions we obtain.  (d) Explaining a series of models enables us to explain groups of features, model loss, and complex pipelines of models (deep feature extraction and stacked generalization).  Experimental setups are described in Appendix Section \ref{sec:supp:exp_setup}.}
\label{fig:concept}
\end{figure}

\section{Generalizing DeepSHAP local explanations}

% \hugh{Pick up here}

A closely related method to DeepSHAP was designed to explain deep models ($f:\mathbb{R}^m\to\mathbb{R}$) \cite{lundberg2017unified}, by performing DeepLIFT \cite{deeplift} using the average as a baseline.  However, using a single average baseline is not the correct approach to explain non-linear models based on connections to Shapley values with an interventional conditional expectation set function and a flat causal graph \cite{janzing2019feature}.  Instead, we show that the correct way to obtain the interventional Shapley value local feature attributions (denoted as $\phi(f,x^e)\in \mathbb{R}^m$) based on an explicand ($x^e\in \mathbb{R}^m$), or sample being explained, is to average over single baseline feature attributions (denoted as $\phi(f,x^e,x^b)\in \mathbb{R}^m$) where baselines are $x^b\in \mathbb{R}^m$ and $D$ is the set of all baselines (details in Methods Section \ref{sec:methods:baseline}):  
\begin{equation}
\phi(f,x^e)=\frac{1}{|D|}\sum_{x^b\in D} \phi(f,x^e,x^b)
\end{equation}

DeepLIFT \cite{deeplift} explains deep models by propagating feature attributions at each layer of the deep model.  Here, we extend DeepLIFT by generalizing DeepLIFT's rescale rule to accommodate more than neural network layers while guaranteeing layer-wise efficiency (details in Methods Section \ref{sec:methods:series_of_models}).  For a series of models which can be represented as a composition of functions ($f_k(x)=(h_k\circ \cdots \circ h_1)(x)$, where $h_i:\mathbb{R}^{m_i}\to\mathbb{R}^{o_i}$, $h_i=o_{i-1}\forall i\in2,\cdots k$, $h_1=m$, and $o_k=1$) with intermediary models ($f_i(x)=(h_i \circ \cdots \circ h_1)(x)$), DeepSHAP attributions are computed as:
\begin{align}
\psi^k&=\hat{\phi}(h_k,x^e,x^b)\\
\psi^i&=\hat{\phi}(h_i,x^e,x^b)\big(\psi^{i+1}\oslash(f_i(x^e)-f_i(x^b))\big),\text{ }i\in 1,\cdots, k-1.
\end{align}
We use Hadamard division to denote an element-wise division of $\Vec{a}$ by $\Vec{b}$ that accommodates zero division, where, if the denominator $b_i$ is 0, we set $a_i/b_i$ to 0.  The attributions $\hat{\phi}$ for a particular model in the stack are computed utilizing DeepLIFT with the rescale rule for deep models \cite{deeplift}, interventional TreeSHAP for tree models \cite{treeshap}, or exactly for linear models.  Each intermediate attribution $\psi^i$ serves as feature attribution that satisfies efficiency for $h_i$'s input features, where the attribution in the raw feature space is given by $\psi^1$.  This approach takes inspiration from the chain rule applied specifically to deep networks in \cite{deeplift}, that we extend to more general classes of models. 

\section{Incorporating a baseline distribution}

We now use DeepSHAP to explain deep models with different choices of baseline distributions to empirically evaluate our theoretical connections to interventional conditional expectations.

\subsection{Baseline distributions avoid bias}
\label{sec:cifar_mult_ref}

\begin{figure}[!ht]
\includegraphics[width=.8\textwidth]{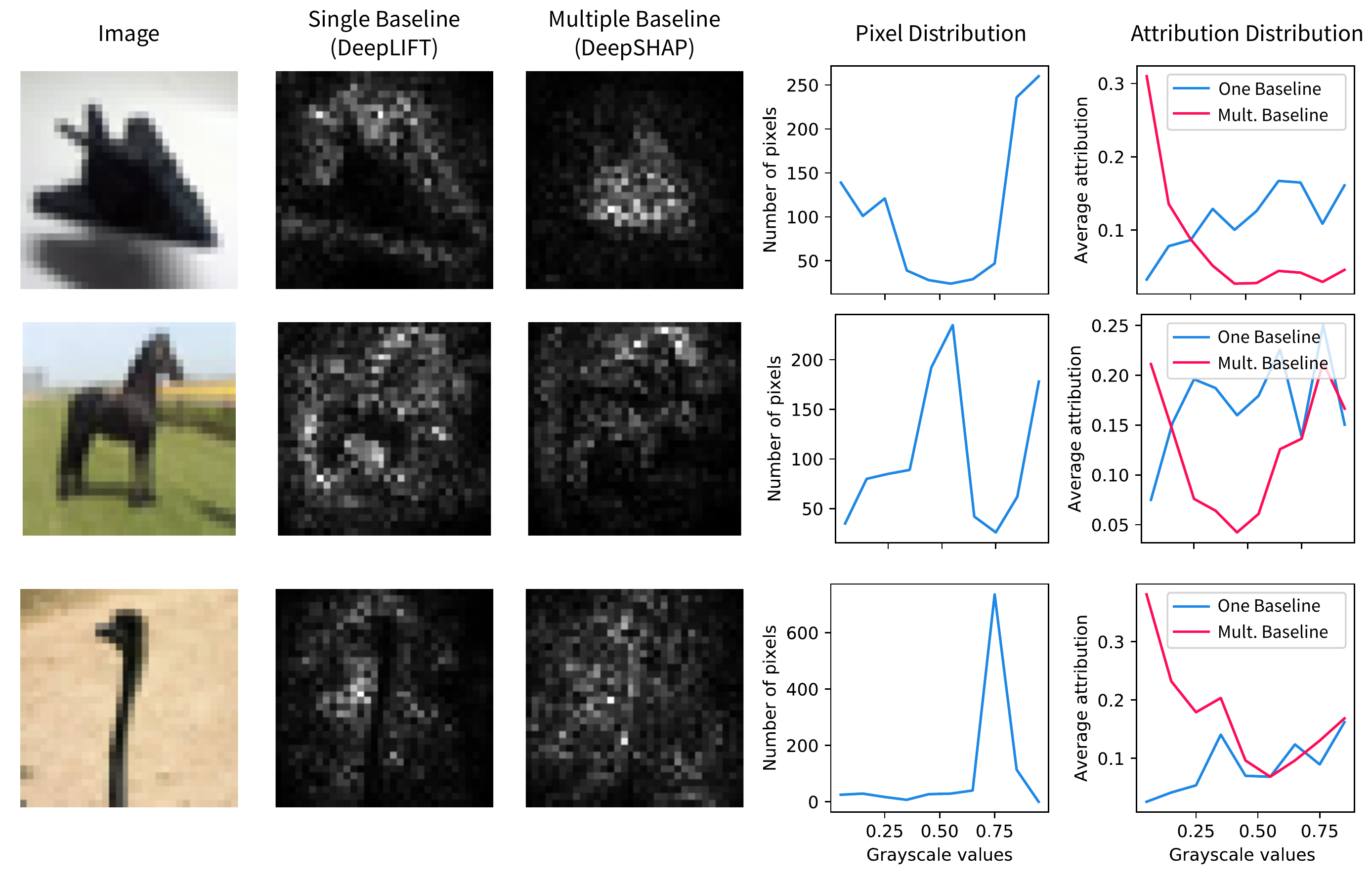}
\centering
\caption{\textbf{Using a single all-black baseline image (DeepLIFT) leads to biased attributions compared to attributions with a randomly sampled baseline distribution (DeepSHAP).}  The image is the explicand.  The attribution plots are the sum of the absolute value of the feature attributions for the three channels of the input image.  The pixel distribution is the distribution of pixels in terms of their grayscale values.  The attribution distribution is the amount of attribution mass upon a group of pixels binned by their grayscale values.}
\label{fig:mult_ref_cifar}
\end{figure}

We show that single baseline attributions are biased in a CNN that achieves 75.56\% test accuracy (hyperparameters in Appendix Section \ref{sec:supp:mult_ref_cifar_setup}) in the CIFAR10 data set \cite{krizhevsky2009learning}.  We aim to demonstrate that single baselines can lead to bias in explanations by comparing attributions using either a single baseline (an all-black image) as in DeepLIFT or a random set of 1000 baselines (random training images) as in DeepSHAP.  Although the black pixels in the image are qualitatively important, using a single baseline leads to biased attributions with little attribution mass for black pixels (Figure \ref{fig:mult_ref_cifar}).  In comparison, averaging over multiple baselines leads to qualitatively more sensible attributions.  Quantitatively, we show that despite the prevalence of darker pixels (pixel distribution plots in Figure \ref{fig:mult_ref_cifar}), single baseline attributions are biased to give them low attribution, whereas averaging over many baselines more sensibly assigns a large amount of credit to dark pixels (attribution distribution plots in Figure \ref{fig:mult_ref_cifar}).  To generalize this finding beyond DeepSHAP, we replicate this bias for IME and IG, two popular feature attribution methods that similarly rely on baseline distributions (Appendix Section \ref{sec:supp:mult_ref_cifar_ig_ime}).

\subsection{Natural scientific questions with baseline distributions}
\label{sec:nhanes_mult_ref}

\begin{figure}[!ht]
\includegraphics[width=.9\textwidth]{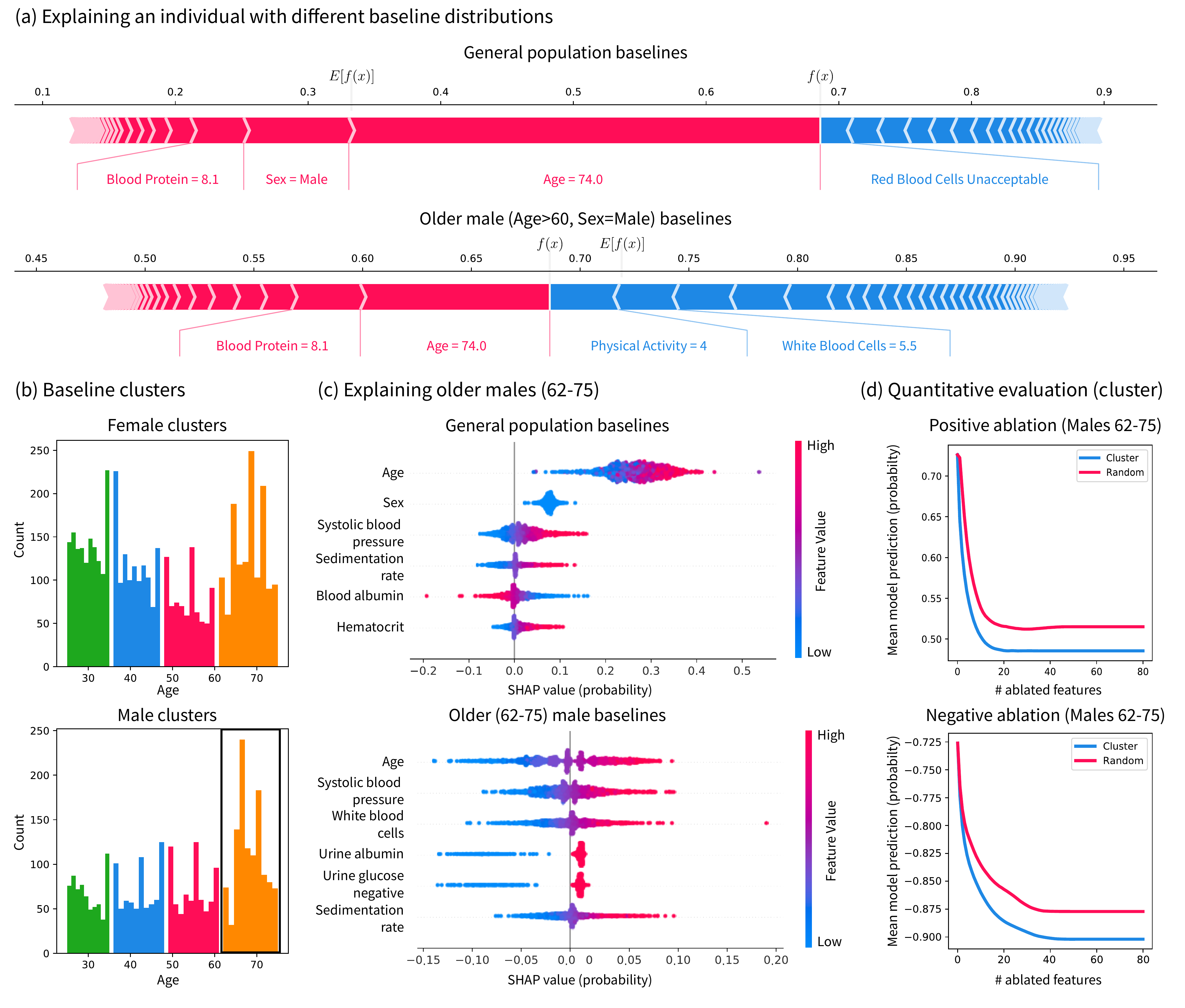}
\centering
\caption{\textbf{The baseline distribution is an important parameter for model explanation.} (a) Explaining an older male explicand with both a general population baseline distribution and an older male baseline distribution.  (b) Automatically finding baseline distributions using 8-means clustering on age and sex.  (c) Explaining the older male subpopulation (62-75 years old) with either a general population baseline or an older male baseline.  (d) Quantitative evaluation of the feature attributions via positive and negative ablation tests where we mask with the mean of the older male subpopulation. Note that (b) shows summary plots (Appendix Section \ref{sec:supp:summary_plot}) and (c) shows dependence plots (Appendix Section \ref{sec:supp:dependence_plot}).}
\label{fig:mult_ref_nhanes}
\end{figure}

To demonstrate the importance of baseline distributions as a parameter, we explain an MLP (hyperparameters in Appendix Section \ref{sec:supp:mult_ref_nhanes_setup}) with 0.872 ROC AUC for predicting fifteen year mortality in the NHANES I data set.
% (Appendix Section \ref{sec:supp:nhanes_i})  
We use DeepSHAP to explain an explicand relative to a baseline distribution drawn uniformly from all samples (Figure \ref{fig:mult_ref_nhanes}a (top)).  This explanation places substantial emphasis on age and sex because it compares the explicand to a population that includes many younger/female individuals.  However, in practice epidemiologists are unlikely to compare a 74-year old male to the general population.  Therefore, we can manually select a baseline distribution of older males to reveal novel insights, as in Figure \ref{fig:mult_ref_nhanes}a (bottom).  The impact of gender is gone because we compare only to males, and the impact of age is lower because we compare only to older individuals.  Furthermore, the impact of physical activity is much higher because being physically active is more healthy compared to older individuals, who are less active than the general population.  This example illustrates that the baseline distribution is an important parameter for feature attributions.

To provide a more principled approach to choosing the baseline distribution parameter, we propose k-means clustering to select a baseline distribution (detail in Methods Section \ref{sec:methods:baseline_kmeans}).  Previous work analyzed clustering in the attribution space or contrasting to negatively/positively labelled samples \cite{merrick2020explanation}.  In Figure \ref{fig:mult_ref_nhanes}b, we show clusters according to age and gender.  Then, we explain many older male explicands using either a general population or an older male population baseline distribution (Figure \ref{fig:mult_ref_nhanes}c).  When we compare to the older male baselines, the importance of age is centered around zero, sex is no longer important, and the importance orderings of remaining features change.  Further, the inquiry we make changes from ``What features are important for older males relative to a general population?'' to ``What features are important for older males relative to other older males?''.  To quantitatively evaluate whether our attributions answer the second inquiry, we can ablate features in order of their positive/negative importance by masking with the mean of the older male baseline distribution (Figure \ref{fig:mult_ref_nhanes}d, (Methods Section \ref{sec:methods:ablation_test})).  In both plots, lower curves indicate attributions that better estimated positive and negative importance.  \textit{For both tests, attributions with a baseline distribution chosen by k-means clustering substantially outperforms a baseline distribution drawn from the general population.}

We find that our clustering-based approach to selecting a baseline distribution has a number of advantages.  Our recommendation is to choose baseline distributions by clustering according to non-modifiable, yet meaningful, features like age and gender.  This yields explanations that answer questions relative to inherently interpretable subpopulations (e.g., older males).  The first advantage is that choosing baseline distributions in this way decreases variance in the features that determined the clusters and subsequently reduces their importance to the model.  This is desirable for age and gender because individuals typically cannot modify their age or gender in order to reduce their mortality risk.  Second, this approach could potentially reduce model evaluation on off-manifold samples when computing Shapley values \cite{kumar2020problems,frye2020shapley} by considering only baselines within a reasonable subpopulation.  The final advantage is that the flexibility of choosing a baseline distribution allows feature attributions to answer natural contrastive scientific questions \cite{merrick2020explanation} that improve model comprehensibility, as in Figure \ref{fig:mult_ref_nhanes}c. 

% \hugh{Describe the differences in explaining older males depending on different baseline distributions.  Once again de-emphasis of gender and centered age.  Decreased importance of sedimentation rate (biased distribution for older individuals).  Maybe hematocrit too?  Maybe look into urine albumin because the dependence plot is interesting.}

\section{Explaining a series of models}

DeepSHAP (and DeepLIFT) have been shown to be very fast and performant explanation methods for explaining deep models \cite{deeplift,schwab2019cxplain,sixt2019explanations}.  In this section, we instead focus on evaluating our extension of DeepSHAP to accommodate a series of mixed models (trees, neural networks, and linear models) and address four impactful applications.

\subsection{Group attributions identify meaningful gene sets}
\label{sec:pathway}

\begin{figure}[!ht]
\includegraphics[width=.9\textwidth]{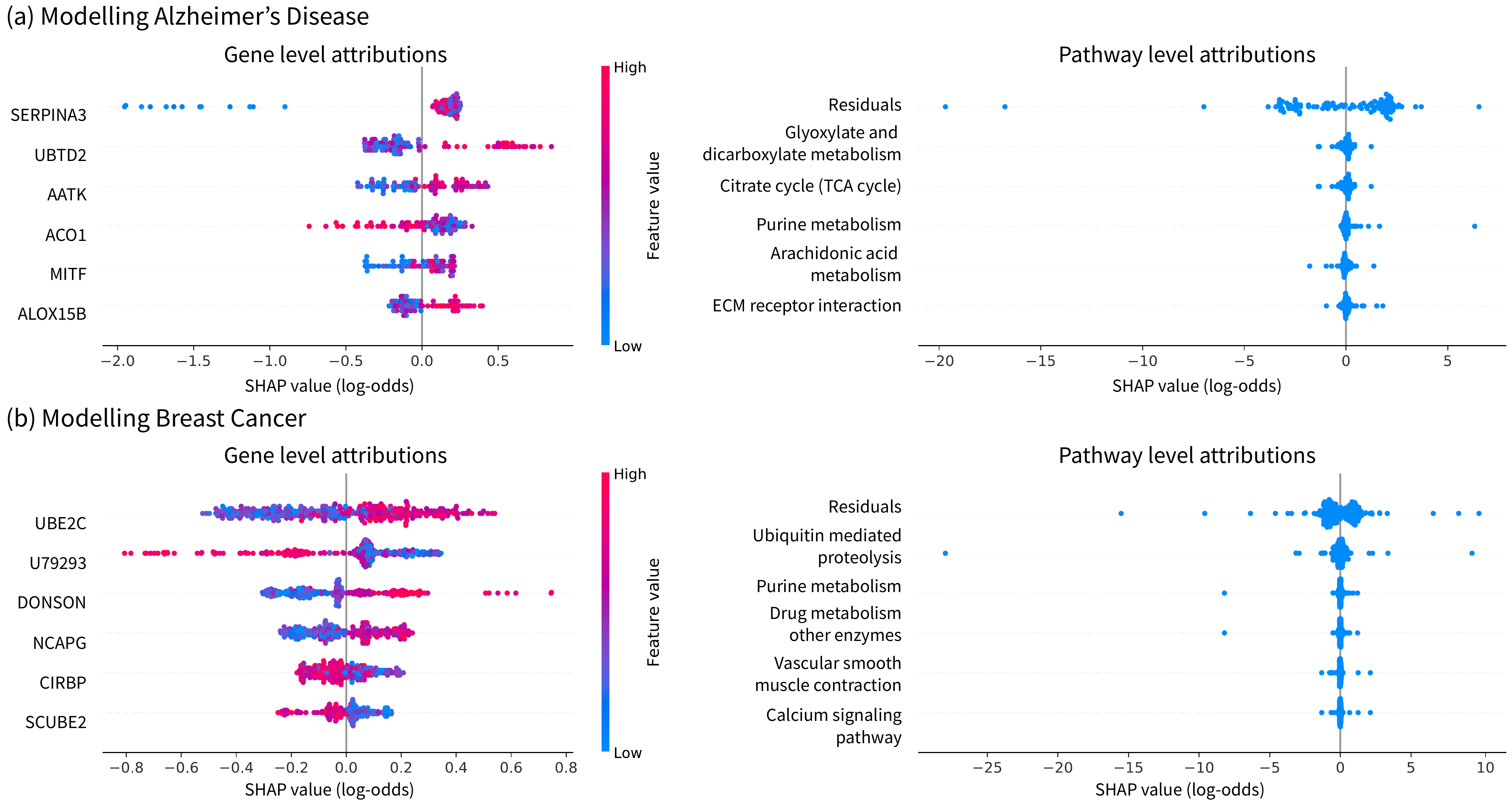}
\centering
\caption{\textbf{Propagating attributions to gene sets enables higher level understanding.}  (a) Gene and gene set attributions for predicting Alzheimer's disease using gene expression data.  (b) Gene and gene set attributions for predicting breast cancer tumor stage using gene expression data.  Residuals in the gene set attributions summarize contributions for genes that are not present in any gene set and describes variations in output not described by the pathways we analyzed.  Note that (a) and (b) show summary plots (Appendix Section \ref{sec:supp:summary_plot}).}
\label{fig:pathway_attributions}
\end{figure}

We explain two MLPs trained to predict Alzheimer's disease status and breast cancer tumor stage from gene expression data with test ROC AUC of 0.959 and 0.932, respectively.  We aim to demonstrate that our approach to propagating attributions to groups contributes to model interpretability by validating our discoveries with scientific literature.   Gene expression data is often extremely high dimensional; as such, solutions such as gene set enrichment analysis (GSEA) are widely used \cite{subramanian2005gene}.  In contrast, we aim to attribute importance to gene sets while maintaining efficiency by proposing a
\textit{group rescale rule} (Methods Section \ref{sec:methods:groups}).  This rule sums attributions for genes belonging to each group and then normalizes according to excess attribution mass due to multiple groups containing the same gene. It generalizes to arbitrary groups of features beyond gene sets, such as categories of epidemiological features (e.g., laboratory measurements, demographic measurements, etc.). 

We can validate several key genes identified by DeepSHAP.  For Alzheimer's disease, the overexpression of SERPINA3 has been closely tied to prion diseases \cite{vanni2017differential}, and UBTD2 has been connected to frontotemporal dementia -- a neurodenerative disorder \cite{taskesen2017susceptible}.  For breast cancer tumor stage, UBE2C was positively correlated with tumor size and histological grade \cite{mo2017clinicopathological}.  In addition to understanding gene importance, understanding higher level importance can be obtained using gene sets, i.e., groups of genes defined by biological pathways or co-expression.  We obtain gene set attributions by grouping genes according to a curated gene set.\footnote{Curated gene sets available here: https://www.gsea-msigdb.org/gsea/msigdb/collections.jsp\#C2}: the KEGG (Kyoto Encyclopedia of Genes and Genomes) pathway database (additional gene set attributions in Appendix Section \ref{sec:supp:additional_gene_sets})  

Next, we verify important gene sets identified by DeepSHAP.  For Alzheimer's disease, the glyoxylate and dicarboxylate metabolism pathway was independently identified based on metabolic biomarkers \cite{yu2017high}; several studies have demonstrated aberrations in the TCA cycle in Alzheimer's disease brain \cite{atamna2007mechanisms}; 
and alterations of purine-related metabolites are known to occur in early stages of Alzheimer's disease \cite{alonso2018purine}.  For breast cancer, many relevant proteins are involved in ubiquitin-proteasome pathways \cite{ohta2004ubiquitin} and purine metabolism was identified as a major metabolic pathway differentiating a highly metastatic breast cancer cell line from a slightly metastatic one \cite{kim2016comparative}.  \textit{Identifying these phenotypically relevant biological pathways demonstrates that our group rescale rule identifies important pathways.}

% One potential limitation of our approach is that larger size gene sets will be biased toward larger attribution mass by definition (Appendix Section \ref{sec:supp:gene_set_size_bias}).

\subsection{Loss attributions provide insights to model behavior}
\label{sec:loss}

\begin{figure}[!ht]
\includegraphics[width=.9\textwidth]{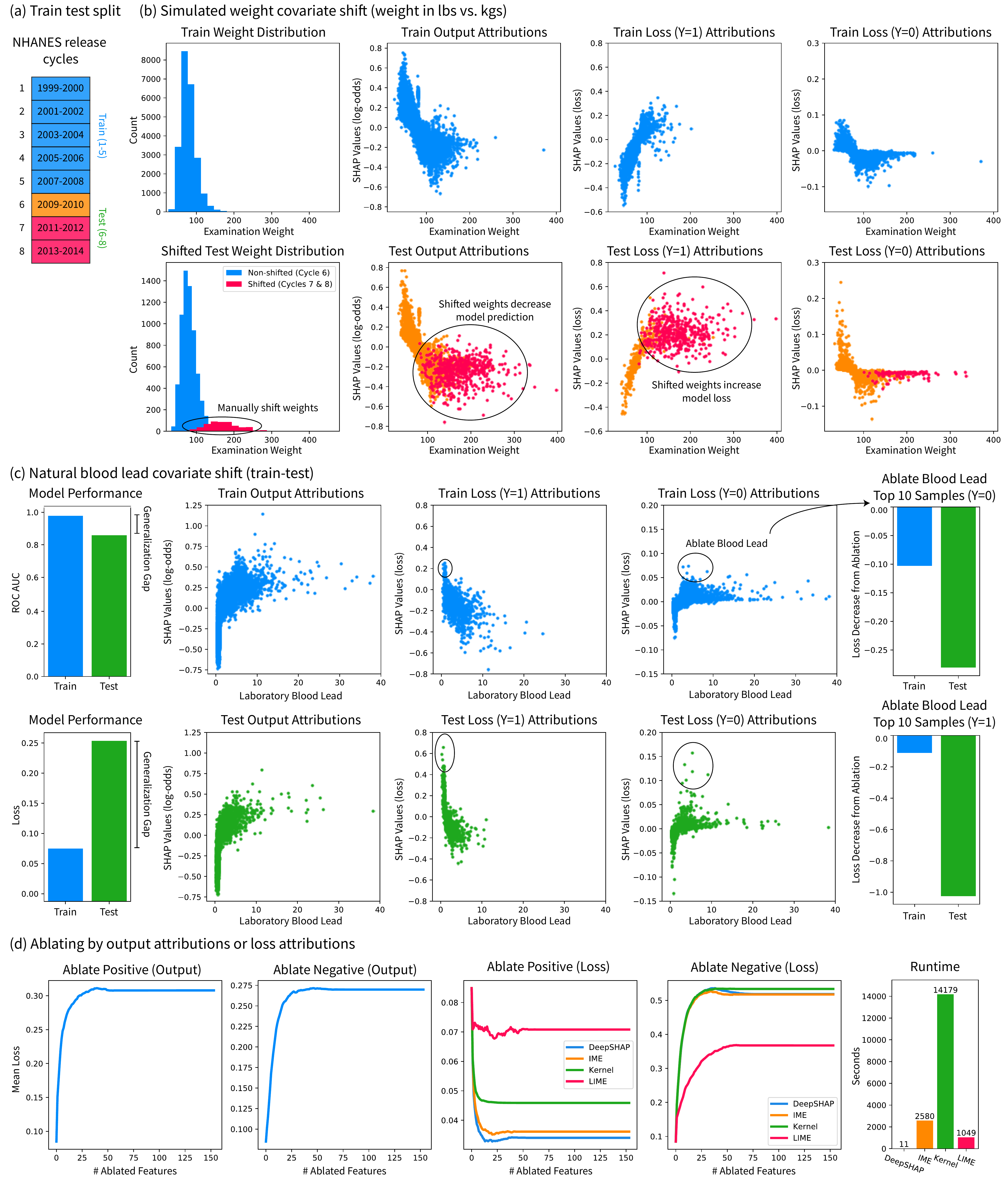}
\centering
\caption{\textbf{Explanations of the model's loss rather than the model's prediction yields new insights.} (a) We train on the first five cycles of NHANES (1999-2008) and test on the last three cycles (2009-2014).  (b) We identify a simulated covariate shift in cycles 7-8 (2011-2014) by examining loss attributions.  (c) Under a natural covariate shift, we identify and quantitatively validate test samples for which blood lead greatly increases the loss in comparison to training samples.  (d) We ablate output attributions (DeepSHAP) and loss attributions (DeepSHAP, IME, KernelSHAP, and LIME) to show their respective impacts on model loss.  We compare only to model-agnostic methods for loss attributions because explaining model loss requires explaining a series of models.  Note that (b) and (c) show dependence plots (Appendix Section \ref{sec:supp:dependence_plot}).  
% We do not compare to model-agnostic methods (IME, KernelSHAP, or LIME) because although they can explain the loss in theory, the existing packages only support explanations of the model outputs.  However, we would expect DeepSHAP to run much faster than model-agnostic approaches as in Figure \ref{fig:feature_extraction}.
}
\label{fig:loss_attributions}
\end{figure}

% \hugh{Section in appendix describing why we can't just modify the treexplainer algorithms to explain in probability or loss space (additivity)}

We examine an NHANES (1999-2014) mortality prediction GBT model (0.868 test set ROC AUC) to show how explaining the model's loss (loss explanations) provides important insights different from insights revealed by explaining the model's output (output explanations).  DeepSHAP lets us explain transformations of the model's output.  For instance, we can explain a binary classification model in terms of its log-odds predictions, its probability predictions (often easier for non-technical collaborators to understand; see Appendix Section \ref{sec:supp:nhanes_prob_vs_logodds}), or its loss computed based on the prediction.  Here, we focus on local feature attributions that explain per-sample loss.\footnote{This is analogous to model monitoring in \cite{treeshap}, which is in fact enabled via the generalized rescale rule we present here.}

We train our model on the first five release cycles of the NHANES data (1999-2008) and evaluate it on a test set of the last three release cycles (2009-2014) (Figure \ref{fig:mult_ref_nhanes}a).  As a motivating example, we simulate a covariate shift in the weight variable by re-coding it to be measured in pounds, rather than kilograms, in release cycles 7 and 8 (Figure \ref{fig:mult_ref_nhanes}b).  Then, we ask, ``Can we identify the impact of the covariate shift with feature attributions?''  Comparing the train and test output attributions, release cycles 7 and 8 are skewed, but they mimic the same general shape of the training set attributions.  If we did not color by release cycles, it might be difficult to identify the covariate shift.  In contrast, for loss attributions with positive labels, we can identify that the falsely increased weight leads to many misclassified samples where the loss weight attribution exceeds the expected loss.  Although such debugging is powerful, it is not perfect.  Note that in the negatively labelled samples, we cannot clearly identify the covariate shift because higher weights are protective and lead to more confident negative mortality prediction.

Next, we examine the natural generalization gap induced by covariate shift over time, which shows a dramatically different loss in the train and test sets (Figure \ref{fig:loss_attributions}c).  We can see that output attributions are similarly shaped between the train and test distributions; however, the loss attributions in the test set are much higher than in the training set.  We can quantitatively verify that negative blood lead affects model performance more in the test set by ablating blood lead for the top 10 samples in the train and test sets according to their loss distributions.  From this, we can see that blood lead constitutes a substantial covariate shift in the model's loss and helps explain the observed generalization gap.

As an extension of the quantitative evaluation in Figure \ref{fig:loss_attributions}c, we can visualize the impact on the model's loss of ablating by output attributions compared to ablating by loss attributions (Figure \ref{fig:loss_attributions}d).  This ablation test (Methods Section \ref{sec:methods:ablation_test}) asks ``What features are important to the model's performance (loss)?''  Ablating the positive and negative attributions both increase the mean model loss by hiding features central to making predictions.  However, ablating by the negative loss attribution directly increases the loss far more drastically than ablating by the output.  More so, ablating positive loss attributions clearly decreases the mean loss, which is not achievable by output attribution ablation.  Finally, we compare loss attributions computed using either a model-agnostic approach or DeepSHAP.  In this setting, DeepSHAP is two orders of magnitude faster than model-agnostic approaches (IME, KernelSHAP, and LIME) while showing extremely competitive positive loss ablation performance and the best negative loss ablation performance.

\subsection{Explaining deep image feature extractors}
\label{sec:feature_extraction}

\begin{figure}[!ht]
\includegraphics[width=.8\textwidth]{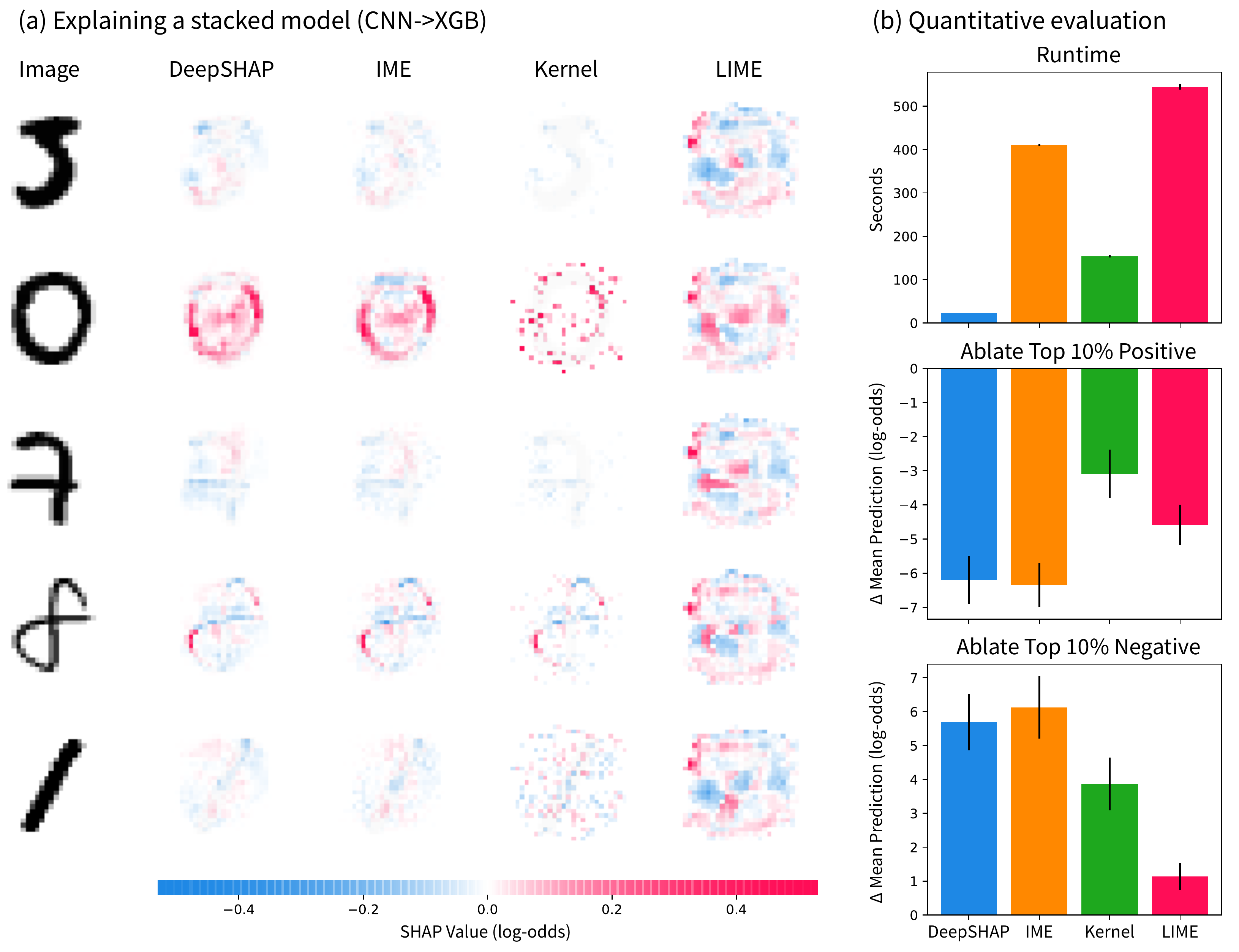}
\centering
\caption{\textbf{Explaining a series of models comprised of a convolutional neural network feature extractor and a gradient boosted tree classifier.} (a) Explanations from DeepSHAP and state of the art model-agnostic approaches.  Each model-agnostic approach has a ``number of samples'' parameter which we set to 100,000.  (b) Quantitative evaluation of approaches, including runtime and ablation of the top 10\% of positive and negative features.}
\label{fig:feature_extraction}
\end{figure}

We compare DeepSHAP explanations to a number of model-agnostic explanations for a series of two models: a CNN feature extractor fed into a GBT model that classifies MNIST zeros with 0.998 test accuracy.  In this example, non-linear transformations of the original feature space improve performance of the downstream model (Appendix Section \ref{sec:supp:feature_extraction_performance}) but make model-specific attributions impossible.  Qualitatively, we can see that DeepSHAP and IME are similar, whereas KernelSHAP is similar for certain explicands but not others\footnote{One potential reason is the regularization in the default settings of the package.} (Figure \ref{fig:feature_extraction}a).  Finally, LIME's attributions show the shape of the original digit, but there is a consistent attribution mass around the surrounding parts of the digit.  Qualitatively, we observe that the DeepSHAP attributions are sensible.  The pixels that constitute the zero digit and the absence of pixels in the center of the zero are important for a positive zero classification.\footnote{The importance of the absence of pixels in the center of the zero is revealed because we use a baseline distribution; and it would not be revealed with an all-white baseline image.}  

In terms of quantitative evaluations, we report the runtime and performance of the different approaches in Figure \ref{fig:feature_extraction}b.  We see that DeepSHAP is an order of magnitude faster than model-agnostic approaches, with KernelSHAP being the second fastest.  Then, we ablate the top 10\% of important positive or negative pixels to see how the model's prediction changes.  If we ablate positive pixels, we would expect the model's predictions to drop, and vice versa for negative pixels; doing both showed that DeepSHAP outperforms KernelSHAP and LIME, and performs comparably to IME at greatly reduced computational cost.

\subsection{Explaining distributed proprietary models}
\label{sec:stacked_generalization}

\begin{figure}[!ht]
\includegraphics[width=\textwidth]{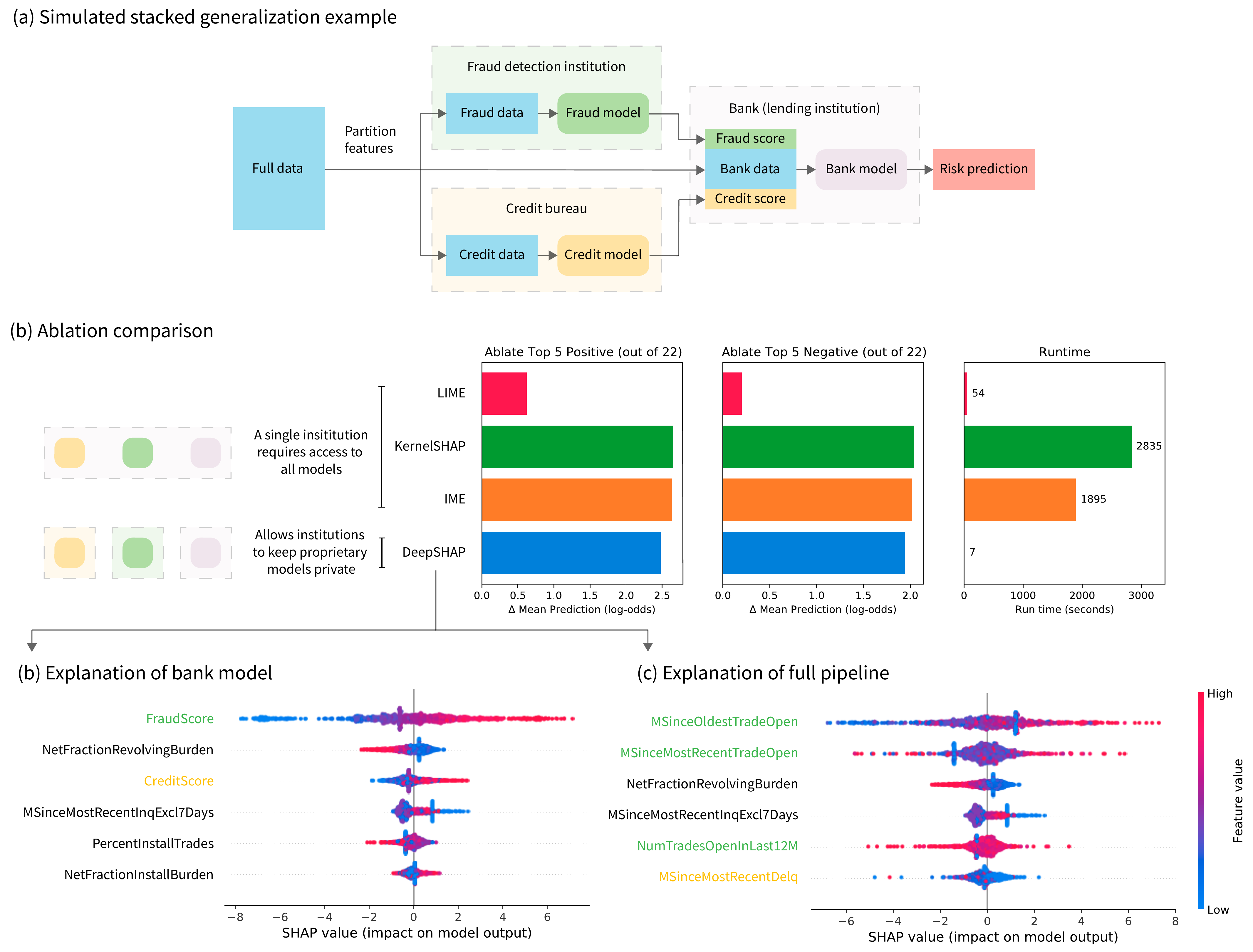}
\centering
\caption{\textbf{Explaining a stacked generalization pipeline of models for the HELOC data set (details in Appendix Section \ref{sec:supp:heloc})}  (a) A simulated model pipeline in the financial services industry.  We partition the original set of features into fraud, credit, and bank features.  We train a model to predict risk using fraud data and a model to predict risk using credit data.  Then, we use the outputs of the fraud and credit models as scores alongside additional bank features to predict the final customer risk.  (b) Ablation tests (ablating top 5 positive/negative features out of a total 22 features) comparing model agnostic approaches (LIME, KernelSHAP, IME with 100 samples), which require access to all models in the pipeline, and DeepSHAP, which allows institutions to keep their proprietary models private. (c) Summary plot of the top six features the bank model uses to predict risk (TreeSHAP).  (d) Summary plot of the top six features the entire pipeline uses to explain risk (DeepSHAP).  The green features originate from the fraud data, and the yellow features from the credit data.  We explain 1000 randomly samples explicands using 100 randomly sampled baselines for all attribution methods.  Note that (c) and (d) show summary plots (Appendix Section \ref{sec:supp:summary_plot}).}
\label{fig:model_stack}
\end{figure}

We evaluate DeepSHAP explanations for a \textbf{consumer scoring} example that feeds a simulated GBT fraud score model and a simulated MLP credit score model into a GBT bank model, which classifies good risk performance (0.681 test ROCAUC) (Figure \ref{fig:model_stack}).  Consumer scores (e.g., credit scores, fraud scores, health risk scores, etc.) describe individual behavior with predictive models \cite{dixon2014scoring}.  A vast industry of data brokers generates consumer scores based on a plethora of consumer data.  For instance, a single data broker in a 2014 FTC study had 3000 data segments on nearly every consumer in the United States, and another broker added three billion new records to its databases each month \cite{schmitz2014secret}.\footnote{Furthermore, even the HELOC data set that we use initially had an ``ExternalRiskEstimate'' feature that we removed for clarity.}  Unfortunately, explaining the models that use consumer scores can obscure important features.  For instance, explaining the bank model in Figure \ref{fig:model_stack}a will tell us that fraud and credit scores are important (in Figure \ref{fig:model_stack}c), but these scores are inherently opaque to consumers \cite{dixon2014scoring}.  The truly important features may instead be those that these scores use.  A better solution might be model-agnostic methods that explain the entire pipeline at once.  However, the model-agnostic approaches require access to all models.  In Figure \ref{fig:model_stack}a, a single institution would have to obtain access to fraud, credit, and bank models to use the standard model-agnostic approaches (Figure \ref{fig:model_stack}b (left)).  This may be fundamentally impractical because each of these models is proprietary.  This opacity is concerning given the growing desire for transparency in artificial intelligence \cite{goodman2017european,dixon2014scoring,schmitz2014secret}.  

DeepSHAP naturally addresses this obstacle by enabling attributions to the original features without forcing companies to share their proprietary models \textit{if} each institution in the pipeline agrees to work together and has a consistent set of baselines.  Furthermore, DeepSHAP can combine any other efficiency-satisfying feature attribution method in an analogous way (e.g., integrated/expected gradients \cite{sundararajan2017axiomatic}).  Altogether, DeepSHAP constitutes an effective way to ``glue'' together explanations across distributed models in industry.   In particular, in Figure \ref{fig:model_stack}a, the lending institution can explain its bank model in terms of bank features and fraud and credit scores.  The bank then sends fraud and credit score attributions to their respective companies, who can use them to generate DeepSHAP attributions to the original fraud and credit features.  The fraud and credit institutions then send the attributions back to the bank, which can provide explanations in terms of the original, more interpretable features to their applicants (Figure \ref{fig:model_stack}d).

We first quantitatively verify that the DeepSHAP attributions for this pipeline are comparable to the model agnostic approaches in Figure \ref{fig:model_stack}b.  We once again see that DeepSHAP attributions are competitive with the best performing attributions methods for ablating the top 5 most important positive or negative features.  Furthermore, we see that DeepSHAP is several orders of magnitude faster than the best performing ablation methods (KernelSHAP and IME) and an order of magnitude faster and much more performant than LIME. 

We can qualitatively verify the attributions in Figures \ref{fig:model_stack}c-d.  In Figure \ref{fig:model_stack}c, we find that the fraud and credit scores are extremely important to the final prediction.  In addition, bank features including low revolving balance divided by credit limit (``NetFractionRevolvingBurden'') and low number of months since inquisitions (``MSinceMostRecentInqExcl7Days'') are congruously important to good risk performance.  Then, in Figure \ref{fig:model_stack}d we use the generalized rescale rule to obtain attributions in the original feature space.  Doing so uncovers important variables hidden by the fraud and credit scores.  In particular, we see that the fraud score heavily relied on a high number of months since the applicants's oldest trade (``MSinceOldestTradeOpen''), and the credit score relied on a low number of months since recent delinquency (``MSinceMostRecentDelq'') in order to identify applicants that likely had good risk performance.  Importantly, the pipeline we analyze in Figure \ref{fig:model_stack}a also constitutes a stacked generalization ensemble, which we analyze more generally in Appendix Section \ref{sec:supp:stacked_gen}.

\section{Discussion}

In this manuscript, we presented examples where explaining a series of models is critical.  Series of models are prevalent in a variety of applications (health, finance, environmental science, etc.), where understanding model behavior contributes important insights.  Furthermore, having a fast approach to explain these complex pipelines may be a major desiderata for a diagnostic tool to debug ML models.  

The practical applications we focus on in this paper include gene set attribution, where the number of features far surpasses the number of samples.  In this case, we provide a rule that aggregates group attributions to higher level groups of features while maintaining efficiency.  Second, we demonstrate the utility of explaining transformations of a model's default output (Appendix Section \ref{sec:supp:nhanes_prob_vs_logodds}).  Explaining the probability output rather than the log-odds output of a logistic model yields more naturally interpretable feature attributions.  Furthermore, explaining the loss of a logistic model enables debugging model performance and identification of covariate shift.  A third application is neural network feature extraction, where pipelines may include transformations of the original features fed into a different model.  In this setting we demonstrate the computational tractability of DeepSHAP compared to model-agnostic approaches.  Finally, because our approach propagates feature attributions through a series of models while satisfying efficiency at each step (Methods Section \ref{sec:methods:efficiency}), the intermediary attributions at each part of the network can be interpreted as well.  We use this to understand the importance of both consumer scores and the original features used by the consumer scores.

In consumer scoring, distributed proprietary models (i.e., models that exist in different institutions) have historically been an obstacle to transparency.  This lack of transparency is particularly concerning given the prevalence of consumer scores, with some data brokers having thousands of data segments on nearly every American consumer \cite{schmitz2014secret}.  In addition, many new consumer scores fall outside the scope of previous regulations (e.g., the Fair Credit Reporting Act and the Equal Credit Opportunity Act) \cite{dixon2014scoring}.  In fact, these new consumer scores that depend on features correlated with protected factors (e.g., race) can reintroduce discrimination hidden behind proprietary models, which is an issue that has historically been a concern in credit scores (the oldest existing example of a consumer score) \cite{dixon2014scoring}.  DeepSHAP naturally enables feature attributions in this setting and takes a significant and practical step towards increasing the transparency of consumer scores and provides a tool to help safeguard against hidden discrimination.

It should be noted that we focus specifically on evaluating DeepSHAP for a series of mixed model types.  Previous work evaluates the rescale rule for explaining deep models, specifically.  The original presentation of the rescale rule \cite{deeplift} demonstrates its applicability to deep networks in explaining digit classification and regulatory DNA classification.  \citeauthor{schwab2019cxplain} shows that for explaining deep networks, DeepSHAP is a very fast yet performant approach in terms of an ablation test for explaining MNIST and CIFAR images.\footnote{Although their approach, CXPlain, is comparably fast at attribution time, it has the added cost of training a separate explanation model.}   Finally, \citeauthor{sixt2019explanations} shows that many modified back propagation feature attribution techniques are independent of the parameters of later layers, with the exception of DeepLIFT.  This particularly significant finding suggests that compared to most fast back propagation-based deep feature attribution approaches, DeepSHAP (which relies on the same rescale rule as DeepLIFT) is not ignorant of later layers in the network.

Although DeepSHAP works very well for explaining a series of mixed model types in practice, an inherent limitation is that it is not guaranteed to satisfy the desirable axioms (e.g., implementation invariance) that other feature attribution approaches satisfy (assuming exact solutions to their intractable problem formulations) \cite{strumbelj2010efficient,lundberg2017unified,sundararajan2017axiomatic}.  This suggests that DeepSHAP may be more appropriate for model debugging or for identifying scientific insights that warrant deeper investigation, particularly in settings where models or the input dimension is huge and tractability is a major concern. However, for applications where high-stakes decision making is important, it may be more appropriate to run axiomatic approaches to completion or use interpretable models \cite{rudin2019stop}. Furthermore, in many real world circumstances, such as distributed proprietary models based on credit risk scores, exact axiomatic approaches and interpretable models are not feasible. In these cases DeepSHAP represents a promising direction that allows multiple agents to collaboratively build explanations while maintaining separation of model ownership.

\section{Methods}
\label{sec:methods}

\subsection{The Shapley value}
\label{sec:methods:shapley}

The Shapley value is a solution concept for allocating credit among players ($M={1,\cdots,m}$) in an $m$-person game.  The game is fully described by a set function $v(S):\mathcal{P}(S)\to \mathbb{R}^1$ that maps the power set of players $S\subseteq M$ to a scalar value.  The Shapley value for player $i$ is the average marginal contribution of that player for all possible permutations of remaining players:
\begin{equation}
\phi_i(v) = \frac{1}{m!} \sum_{P\in \Sigma_{m-1}(M)} (v(S^P\cup i) - v(S^P)).
\end{equation}
We denote the finite symmetric group $\Sigma_{n-1}(M)$, which is the set of all possible permutations, and $S^P$ to be the set of players before player $i$ in the permutation $P$.  The Shapley value is a provably unique solution under a set of axioms (Appendix Section \ref{sec:supp:shapley_axioms}).  One axiom that we focus on in this paper is \textit{efficiency}:
\begin{equation}
\sum_{i=1}^m \phi_i(v) = v(M) - v(\emptyset).
\end{equation}

\subsubsection*{Adapting the Shapley value for feature attribution of ML models}

Unfortunately, the Shapley value cannot assign credit for an ML model ($f(x):\mathbb{R}^m\to \mathbb{R}^1$) directly because most models require inputs with values for every feature, rather than a subset of features.  Accordingly, feature attribution approaches based on the Shapley value define a new set function $v(S)$ that is a ``lift'' of the original model \cite{merrill2019generalized}.  In this paper, we focus on local feature attributions,\footnote{As opposed to global feature attributions, which measure feature importance of a model across an entire data set.} which describe a model's behavior for a single sample, called an explicand ($x^e$).  A ``lift'' is defined as:
\begin{equation}
\mu(f,x^e,S):\mathbb{R}^m\times 2^m\to \mathbb{R}^1.
\end{equation}

% \subsection{Common lifts of machine learning models}

One common lift is the \textit{observational conditional expectation}, where the lift is the conditional expectation of the model's output holding features in $S$ fixed to $x^e_S$ and $X$ is a multivariate random variable with joint distribution $D$: 
\begin{equation}
    \mu^{obs}_D(f,x^e,S)=\mathbb{E}_D[f(X)|X_S=x^e_S].
\end{equation}
  
Another common lift is the \textit{interventional conditional expectation} with a flat causal graph, where we "intervene" on features by breaking the dependence between features in $X_S$ and the remaining features using the causal inference $do$-operator \cite{janzing2019feature}:
\begin{equation}
    \mu^{int}_D(f,x^e,S)=\mathbb{E}_D[f(X)|do(X_S=x^e_S)].
    \label{eq:int_lift}
\end{equation}

Both approaches have tradeoffs that have been described elsewhere \cite{chen2020true,kumar2020problems,sundararajan2020themanyshapleyvalues,merrick2020explanation,frye2020shapley}.  Here, we focus on the interventional approach because it is most closely related to DeepSHAP and does not require estimating the joint density of $X$.

The Shapley values computed for any lift will satisfy efficiency in terms of the lift $\mu$.  However, for the interventional and observational lift described above, the Shapley value will also satisfy efficiency in terms of the model's prediction:
\begin{equation}
    \sum_i\phi^{\mu_D}_i(f,x^e)=f(x^e)-\mathbb{E}_D[f(X)].
\end{equation}
This means that attributions can naturally be understood to be in the scale of the model's predictions (e.g., log-odds or probability for binary classification).

\subsection{Interventional Shapley values baseline distribution}
\label{sec:methods:baseline}

We can define a single baseline lift
\begin{equation}
\mu^{int}_{x^b}(f,x^e,S)=\mathbb{E}_{\{x^b\}}[f(X)|do(X_S=x^e_S)]=\chi^{S},
\end{equation}
where $\chi^{S}$ is a spliced sample and $\chi^{S}_i=x^e_i$ if $i\in S$, else $\chi^{S}_i=x^b_i$.

Then, we can decompose the Shapley value $\phi_i(f,x^e)$ for the interventional conditional expectation lift (eq. \ref{eq:int_lift}) (henceforth referred to as the interventional Shapley value) into an average of Shapley values with single baseline lifts (proof in Appendix Section \ref{sec:supp:int_baseline_dist_proof}):
\begin{align}
\phi_i(f,x^e,D)&=\frac{1}{|D|}\sum_{x^b\in D} \underbrace{\frac{1}{m!}\sum_{P\in \Sigma_{m-1}(M)} f(\chi^{S^P\cup i}) {-} f(\chi^{S^P})}_\text{Shapley value for single baseline lift}\\
&= \frac{1}{|D|}\sum_{x^b\in D} \phi_i(f,x^e,x^b).
\end{align}
Here, $D$ is an empirical distribution with equal probability for each sample in a baseline data set.  An analogous result exists for the observational conditional distribution lift using an input distribution\cite{merrick2020explanation}.\footnote{The attributions for these single baseline games are also analogous to baseline Shapley in \cite{sundararajan2020themanyshapleyvalues}.} 

In the original DeepLIFT paper, \cite{deeplift} recommend two heuristic approaches to define baseline distributions: (1) choosing a sensible single baseline and (2) averaging over multiple baselines.  In addition, DeepSHAP, as previously described in \cite{lundberg2017unified}, created attributions with respect to a single baseline equal to the expected value of the inputs.  In this paper, we show that from the perspective of Shapley values with an interventional conditional expectation lift, averaging over feature attributions computed with single baselines drawn from an empirical distribution is the correct approach.  One exception to this are linear models, where taking the average as the baseline is equivalent to averaging over many single baseline feature attributions \cite{chen2020true}.  Interventional Shapley values computed with a single baseline satisfy efficiency in terms of the model's prediction:
\begin{equation}
\sum_i\phi_i(f,x^e,x^b)=f(x^e)-f(x^b).
\end{equation}

\subsection{Selecting a baseline distribution}
\label{sec:methods:baseline_kmeans}

As in the previous section, we define a baseline distribution $D$ over which we compute Shapley values with single baseline lifts.  This baseline distribution is naturally chosen to be a distribution over the training data $X^{train}$, where each sample $x^j\in \mathbb{R}^m$ has equal probability.  The interpretation of this distribution is that the explicand is compared to each baseline in $D$.  This means that the interventional Shapley values implicitly create attributions that explain the model's output relative to a baseline distribution.  

Although the entire training distribution is a natural and interpretable choice of baseline distribution, it may be desirable to use others.  To automate the process of choosing such an interpretable baseline distribution, we turn to unsupervised clustering.  We utilize k-means clustering on a reduced version of the training data ($\hat{X}^{train}$) comprised of $\hat{x}^j= [ x^j_i \forall i\in M_r ]$ with a reduced set of features ($M_r$).  The output of the k-means clustering are clusters $C_1,\cdots, C_k$ with means $\mu_1,\cdots,\mu_k$ that minimize the following objective on the reduced training data:
\begin{equation}
\argmin_{C_1,\cdots,C_k} \sum_{i=1}^k\sum_{\hat{x}\in C_i} ||\hat{x}-\mu_i||^2.
\end{equation}

Then, the cluster selected as a baseline distribution explaining an explicand $x^e$ is chosen based on:
\begin{equation}
\argmin_{i} ||\hat{x}^e-\mu_i||^2.
\end{equation}

\subsection{A generalized rescale rule to explain a series of models}
\label{sec:methods:series_of_models}

We define a \textit{generalized rescale rule} to explain an arbitrary series of models that propagates approximate Shapley values with an interventional conditional expectation lift for each model in the series.\footnote{This generalized chain rule will also generalize to any feature attribution method that satisfies efficiency.}  To describe the approach, we define a \textit{series of models} to be a composition of functions $f_k(x)=(h_k\circ \cdots \circ h_1)(x)$, and we define intermediary models $f_i(x)=(h_i \circ \cdots \circ h_1)(x),\text{ }i=1,\cdots,k$.  We define the domain and codomain of each model in the series as $h_i(x):\mathbb{R}^{m_i}\to \mathbb{R}^{o_i}$.  Then, we can define the propagation for a single baseline\footnote{The baseline distribution attribution $\phi_i(f,x^e,D)$ is then simply the average across many of these single baseline attributions $\phi_i(f,x^e,x^b)$.} recursively:
\begin{align}
\psi^k&=\hat{\phi}(h_k,x^e,x^b)\\
\psi^i&=\hat{\phi}(h_i,x^e,x^b)\big(\psi^{i+1}\oslash(f_i(x^e)-f_i(x^b))\big),\text{ }i\in 1,\cdots, k-1.
\end{align}

We use Hadamard division to denote an element-wise division of $\Vec{a}$ by $\Vec{b}$ that accommodates zero division, where if the denominator $b_i$ is 0, we set $a_i/b_i$ to 0.  Additionally, $\hat{\phi}$ are an appropriate feature attribution technique that approximates interventional Shapley values while crucially satisfying efficiency for the model $h_i$ it is explaining.  In this paper, we utilize DeepLIFT (rescale) for deep models, TreeSHAP for tree models, and exact interventional Shapley values for linear models.  We define efficiency as $\hat{1}_{1\times m_i}\hat{\phi}(h_i,x^e,x^b)=f_i(x^e)-f_i(x^b)$ where $\hat{1}_{a\times b}$ is a matrix of ones with shape $a\times b$ and the approximate Shapley value functions $\hat{\phi}$ return matrices in $\mathbb{R}^{(m_i\times o_i)}$.  The final attributions in the original feature space are:
\begin{equation}
\phi_i(f_k,x^e,x^b)=\psi^1_i.
\end{equation}

Furthermore, this approach yields intermediate attributions that serve as meaningful feature attributions.  In particular, $\psi^i$ can be interpreted as the importance of the inputs to the model $(h_k\circ \cdots \circ h_i)$, where the new explicand and baseline are $(h_{i-1}\circ \cdots \circ h_1)(x^e)$ and $(h_{i-1}\circ \cdots \circ h_1)(x^b)$, respectively.  This approach takes inspiration from the chain rule applied specifically for deep networks in \cite{deeplift}, but we extend it to more general classes of models. 

\subsection{Efficiency for intermediate attributions}
\label{sec:methods:efficiency}

As one might expect, each intermediate attribution $\psi^i$ satisfies efficiency:

\begin{theorem}
Each attribution $\psi^i\in \mathbb{R}^m, \forall i\in1,\cdots,k$ satisfies efficiency and sums up to $f_k(x^e)-f_k(x^b)$.
\end{theorem}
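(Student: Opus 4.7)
The plan is to prove the theorem by backward induction on the layer index $i$, from $k$ down to $1$, using the layer-wise efficiency property guaranteed for each $\hat{\phi}$ together with a "dummy" property (attribution of a constant coordinate is zero) that holds for all three attribution methods in play (DeepLIFT rescale, interventional TreeSHAP, exact linear Shapley).

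For the base case $i = k$, I would note that $o_k = 1$, so $\hat{\phi}(h_k, x^e, x^b)$ is a column vector in $\mathbb{R}^{m_k}$ and the assumed efficiency $\hat{1}_{1\times m_k}\hat{\phi}(h_k, x^e, x^b) = f_k(x^e) - f_k(x^b)$ immediately gives $\sum_j \psi^k_j = f_k(x^e) - f_k(x^b)$.

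For the inductive step, assume $\sum_l \psi^{i+1}_l = f_k(x^e) - f_k(x^b)$. Let $c = \psi^{i+1} \oslash (f_i(x^e) - f_i(x^b)) \in \mathbb{R}^{o_i}$, so $\psi^i = \hat{\phi}(h_i, x^e, x^b)\, c$. Swapping the order of summation gives
\begin{equation}
\sum_{j=1}^{m_i} \psi^i_j \;=\; \sum_{l=1}^{o_i} c_l \sum_{j=1}^{m_i} \hat{\phi}(h_i, x^e, x^b)_{j,l} \;=\; \sum_{l=1}^{o_i} c_l \bigl(f_i(x^e)_l - f_i(x^b)_l\bigr),
\end{equation}
where the last equality applies the column-wise efficiency of $\hat{\phi}$ for $h_i$. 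For each $l$ with $f_i(x^e)_l \neq f_i(x^b)_l$, the summand equals $\psi^{i+1}_l$ by definition of Hadamard division. For each $l$ with $f_i(x^e)_l = f_i(x^b)_l$, the Hadamard convention sets $c_l = 0$, so the summand is $0$; but also the dummy property applied to $\hat{\phi}$ used to compute $\psi^{i+1}$ (the $l$-th input to $h_{i+1}$ is identical under explicand and baseline) forces $\psi^{i+1}_l = 0$. Hence the right-hand side equals $\sum_l \psi^{i+1}_l$, which by the inductive hypothesis is $f_k(x^e) - f_k(x^b)$.

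The main obstacle, and the only subtle point, is handling the zero-denominator case in the Hadamard division. Without further care, one might worry that dropping such coordinates breaks the telescoping; the fix is to observe that the three specific base attribution methods used (DeepLIFT rescale, interventional TreeSHAP, exact linear Shapley) all assign zero attribution to coordinates that coincide between explicand and baseline, so the terms we discard on the right are precisely the terms that are zero on the left. Once this is in place, the recursion telescopes cleanly and the theorem follows; the final attributions in the raw feature space, $\phi_i(f_k, x^e, x^b) = \psi^1_i$, then inherit efficiency with respect to $f_k(x^e) - f_k(x^b)$ as a corollary.
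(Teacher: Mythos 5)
Your proof is correct and follows essentially the same backward induction on layer-wise efficiency as the paper. In fact, your explicit treatment of the zero-denominator coordinates via the dummy property is more careful than the paper's own induction step, which passes silently from $(f_i(x^e)-f_i(x^b))\big(\psi^{i+1}\oslash(f_i(x^e)-f_i(x^b))\big)$ to $\hat{1}_{1\times o_i}\,\psi^{i+1}$ --- an identity that holds only because $\psi^{i+1}_l$ vanishes whenever $f_i(x^e)_l = f_i(x^b)_l$, which is precisely the point you verify.
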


\begin{proof}
We will prove by induction that
\begin{equation}
\hat{1}_{1\times m_i} \psi^i = f_k(x^e)-f_k(x^b), \forall i \in 1,\cdots,k.
\label{eq:efficiency}
\end{equation}
\noindent
For simplicity of notation, denote $\hat\phi^i=\hat\phi(h^i,x^e,x^b)$.

\noindent
\textit{Assumption:} Each $\hat{\phi}$ satisfies efficiency
\begin{equation}
\hat{1}_{1\times m_i} \hat\phi^i=f_i(x^e)-f_i(x^b).
\end{equation}
\noindent
\textit{Base Case:} By our assumption, 
\begin{equation}
\hat{1}_{1\times m_k} \psi^{k}=f_k(x^e)-f_k(x^b).
\end{equation}

\noindent
\textit{Induction Step:}
\begin{align}
\psi^i &= \hat{\phi}\big(\psi^{i+1}\oslash(f_i(x^e)-f_i(x^b))\big)\\
\hat{1}_{1\times m_i} \psi^i &= \hat{1}_{1\times m_i} \hat{\phi}\big(\psi^{i+1}\oslash(f_i(x^e)-f_i(x^b))\big)\\
&= (f_i(x^e)-f_i(x^b))\big(\psi^{i+1}\oslash(f_i(x^e)-f_i(x^b))\big)\\
&= \hat{1}_{1\times o_i} \psi^{i+1}\\
&= \hat{1}_{1\times m_{i+1}} \psi^{i+1}\\
&= f_k(x^e)-f_k(x^b).
\end{align}

\noindent
\textit{Conclusion:} By the principle of induction, each intermediate attribution satisfies efficiency (eq. \ref{eq:efficiency}).

\end{proof}

Then, because the interventional Shapley value with a baseline distribution is the average of many single baseline attributions, it satisfies a related notion of efficiency:
\begin{align}
\sum_i \phi_i(f_k,x^e) &= \sum_i \sum_{x^b\in D} \phi_i(f_k,x^e,x^b)\\
&= \sum_{x^b\in D} \sum_i \phi_i(f_k,x^e,x^b)\\
&= \sum_{x^b\in D} f_k(x^e) - f_k(x^b)\\
&= f_k(x^e) - \frac{1}{|D|} \sum_{x^b\in D} f_k(x^b).
\end{align}
This can be naturally interpreted as the difference between the explicand's prediction and the expected value of the function across the baseline distribution.

An additional property of the generalized rescale rule is that although it is an approximation to the interventional Shapley values in the general case, if every model in the composition is linear ($h_i(x)=\beta x$), then this propagation exactly yields the interventional Shapley values (Appendix Section \ref{sec:supp:linear_series_of_models}).

\subsection{Connecting DeepLIFT's rules to the Shapley values}
\label{sec:methods:deeplift_shapley}

Now we can connect the Shapley values to DeepLIFT's Rescale and RevealCancel rules.  Both rules aim to satisfy an efficiency axiom (what they call \textit{summation to delta}) and can be connected to an interventional conditional expectation lift with a single baseline (as in Section \ref{sec:methods:baseline_kmeans}).

In fact, multi-layer perceptrons are a special case where the models in the series are non-linearities applied to linear functions.  We first represent deep models as a composition of functions $(h_1\circ \cdots \circ h_k)(x)$.  The Rescale and RevealCancel rules canonically apply to a specific class of function: $h_i(x)=(f\circ g)(x)$, where $f$ is a non-linear function and $g$ is a linear function parameterized by $\beta\in \mathbb{R}^m$.  We can interpret both rules as an approximation to interventional Shapley values based on the following definition.

\begin{definition}[k-partition approximation]
A k-partition approximation to the Shapley values splits the features in $x\in \mathbb{R}^m$ into $K$ disjoint sets.  Then, it exactly computes the Shapley value for each set and propagates it linearly to each component of the set.
\end{definition}

The Rescale rule can be described as a 1-partition approximation to the Interventional Shapley values for $h_i(x)$, while the RevealCancel rule can be described as a 2-partition approximation that splits according to whether $\beta_i x_i>t$, where the threshold $t=0$.  This k-partition approximation lets us consider alternative variants of the Rescale and RevealCancel rules that incur exponentially larger costs in terms of $K$ and for different choices of thresholds.

\subsection{Explaining groups of input features}
\label{sec:methods:groups}

Here, we further generalize the Rescale rule to support groupings of features in the input space.  Having such a method can be particularly useful when explaining models with very large numbers of features that are more understandable in higher level groups.  One natural example is gene expression data, where the numbers of features is often extremely large.  

We introduce a \textit{group rescale rule} that facilitates higher level understanding of feature attributions.  We can define a set of groups $G_1,\cdots,G_o$ whose members are the input features $x_i$.  If each group is disjoint and covers the full set of features, then a natural group attribution that satisfies efficiency is the sum:  
\begin{equation}
\phi^0_{G_j}(f,x^e)=\sum_{i\in G_j}\phi_{i}(f,x^e).
\label{eq:group_attr_0}
\end{equation}
If the groups are not disjoint or do not cover all input features, then the above attributions do not satisfy efficiency.  To address this, we define a residual group $G_R$ that covers all input features not covered by the remaining groups.  Then, the new attributions are a rescaled version of eq. \ref{eq:group_attr_0}
\begin{align}
\phi_{G_j}(f,x^e)&=\phi^0_{G_j}(f,x^e)\times \frac{\sum \phi_{G_j}(f,x^e)}{\sum \phi_{i}(f,x^e)}.
\end{align}

We can naturally extend this approach to accommodate non-uniform weighting of group elements, although we do not experiment with this in our paper.
% \hugh{Appendix - non-uniform weights}

\subsection{Ablation Tests}
\label{sec:methods:ablation_test}

We evaluate our feature attribution methods with \textit{ablation tests} \cite{hooker2019benchmark,treeshap}.  In particular, we rely on a simple yet intuitive ablation test.  For a matrix of explicands $X^e\in\mathbb{R}^{n_e,m}$, we can get attributions $\phi(f,X^e)\in\mathbb{R}^{n_e,m}$.  The ablation test is defined by three parameters: (1) the feature ordering, (2) an imputation sample $x^b\in \mathbb{R}^m$, and (3) an evaluation metric.  Then, the ablation test replaces features one at a time with the baseline's feature value based on the feature attributions to assess the impact on the evaluation metric.  We can iteratively define the ablation test based on modified versions of the original explicands: 
\begin{align}
X^{e,0} &= X^e\\
X^{e,k} &= X^e \odot I_k(\phi) + x^b \odot (1-I_k(\phi)), \forall k \in 1,\cdots, m.
\end{align}

Note that $x^b\coloneqq[\underbrace{x^b \cdots x^b}_{n_e\text{ elements}}]^T$ and $I_k(\phi)\coloneqq I_k(\phi(f,X^e))=\argmax_{k,axis=1}(\phi(f,X^e))$, where $\argmax_{k,axis=1}(G)$ returns an indicator matrix of the same size as $G$ and 1 indicates that the element was in the maximum $k$ elements across a particular axis.  

Then, the ablation test measures the mean model output (e.g., the predicted log-odds, predicted probability, the loss, etc.) if we ablate $k$ features to be the average over the predictions for each ablated explicand: 
\begin{align}
\frac{1}{n_e} \sum_{i\in 1,\cdots,n_e} f(X^{e,k}_i).
\end{align}
Note that for our ablation tests we focus on either the positive or the negative elements of $\phi$, since the expected change in model output is clear if we ablate only by positive or negative attributions.

Ablation tests are a natural approach to test whether feature attributions are correct for a set of explicands.  For feature attributions that explain the predicted log-odds, a natural choice of model output for the ablation test is the mean of the log-odds predictions.  Then, as we ablate increasing numbers of features, we expect to see the model's output change.  When we ablate the most positive features, the mean model output should decrease substantially.  As we ablate additional features, the mean model output should still decrease, but less drastically so.  This implies that, for positive ablations, lower curves imply attributions that better described the model's behavior.  In contrast, for negative ablations

\section{Acknowledgements}

We thank Ayse B. Dincer, Pascal Sturmfels, Joseph Janizek, and Gabe Erion for helpful discussions.  We thank the NSF GRFP DGE-1762114 for support.

\section{Competing Interests}

The authors declare that there are no competing interests.

\section{Data availability}

The NHANES I, NHANES 1999-2014, CIFAR, and MNIST data sets are all publicly available.  The HELOC data set can be obtained by accepting the data set usage license: (\url{https://community.fico.com/s/explainable-machine-learning-challenge?tabset-3158a=a4c37}).  Metabric data access is restricted and requires getting an approval through Sage Bionetworks Synapse website: \url{https://www.synapse.org/#!Synapse:syn1688369} and \url{https://www.synapse.org/#!Synapse:syn1688370}.  ROSMAP data access is restricted and requires getting an approval through Sage Bionetworks Synapse website: \url{https://www.synapse.org/#!Synapse:syn3219045} and is available as part of the AD Knowledge Portal \cite{greenwood2020ad}.

\section{Code availability}

The code for the experiments is available here: \url{https://github.com/suinleelab/DeepSHAP}.

\section{Author Contribution}

H.C. contributed to study design, data analysis, and manuscript preparation. S.M.L. contributed to data analysis and manuscript preparation. S.L. contributed to study design, and manuscript preparation.

\newpage

% \bibliography{main}
% \bibliographystyle{plainnat}

\vskip 0.2in
\printbibliography

\appendix

\section{Appendix}

\subsection{Data Sets}

\subsubsection{NHANES I}
\label{sec:supp:nhanes_i}

The National Health and Nutrition Examination Survey (NHANES) I \cite{cox1998plan} is a national longitudinal study conducted on a random sample of individuals from the United States.  NHANES I investigates a number of demographics and socioeconomic variables.  We utilize the NHANES I Epidemiologic Follow-up Study (NHEFS) which is designed to investigated the relationships between clinical, nutritional, and behavioral factors originally assessed in NHANES I.  The NHEFS study comprised a series of follow up studies that trace the cohort (all persons 25-74 years of age who completed a medical examination in NHANES I) and measure additional variables as well as collect death certificates.

\subsubsection{NHANES 1999-2014}
\label{sec:supp:nhanes_1999_2014}

The National Health and Nutrition Examination Survey (NHANES) continually collects information on subsamples of the civilian noninstitutionalized US population in two-year cycles.  We collected the data from these cycles from 1999-2014 yielding a total of eight release cycles.  The surveys collect a variety of laboratory, questionnaire, examination, and demographic data.  In particular, the features collected do not match across cycles, so we only utilize variables that are consistently collected across cycles.  

\subsubsection{ROSMAP Alzheimer's Gene Expression}
\label{sec:supp:alzheimers}

Gene expression data collected from the Religious Orders Study (ROS) and Memory and Aging Project (MAP) \cite{a2012overview,bennett2018religious}.  ROS is a longitudinal cohort study of aging and Alzheimer's disease run by Rush University enrolling individuals from religious communities for longitudinal clinical analysis and brain donation.  MAP is a longitudinal epidemiologic cohort study of common chronic conditions of aging run by Rush University that aims to complement the ROS study by enrolling individuals with wider life experiences and socioeconomic status.  Both studies aim to study aging and risk of Alzheimer's disease.  We utilize gene expression data collected using ChIP-seq and predict Alzheimer's disease status of the corresponding patients.

% https://www.radc.rush.edu/resources/docs/RADC%20Resource%20Sharing%20Policies%20082517.pdf

\subsubsection{METABRIC Breast Cancer Gene Expression}
\label{sec:supp:breast_cancer}

Gene expression data collected from Molecular Taxonomy of Breast Cancer International Consortium (METABRIC) \cite{curtis2012genomic,pereira2016somatic}.  METABRIC analyzes genomic and transcriptomic information from a set of 995 breast cancer tumors.  Although the original analyses of the transcriptomic information from the tumors are used for a variety of analyses, we purely utilize the transcriptomic information to predict tumor status.

% https://journals.plos.org/ploscompbiol/article?id=10.1371/journal.pcbi.1004888

\subsubsection{CIFAR}
\label{sec:supp:cifar}

The CIFAR10 data set consists of $32\times 32$ color images with 10 possible classes that are a labeled subset of the 80 million tiny images data set \cite{krizhevsky2009learning}.  The mutually classes include airplanes, automobiles, birds, cats, deers, dogs, frogs, horses, ships, and trucks.  In particular, the images were collected by colleagues at MIT and NYU with natural images collected on a number of search engines.

\subsubsection{MNIST}
\label{sec:supp:mnist}

The MNIST database consists of $28\times 28$ black and white handwritten digits \cite{lecun1998mnist}.  The digits are size-normalized and centered in a fixed-size image.  There are ten possible classes that correspond to the digits $0,\cdots,9$.

\subsubsection{HELOC}
\label{sec:supp:heloc}

The Home Equity Line of Credit (HELOC) data set \cite{heloc} is an anonymized data set of HELOC applications from real homeowners. A HELOC is a line of credit offered by a bank as a percentage of home equity. The outcome is whether the applicant will repay their HELOC account within two years.  Financial institutions use predictions of loan repayment to decide whether applicants qualify for a line of credit. The data set was released as part of a FICO xML Challenge (\url{https://community.fico.com/s/explainable-machine-learning-challenge}) and can be obtained under appropriate agreement to a data set usage license.

% \subsection{Ethics}

% The NHANES I, NHANES 1999-2014, CIFAR, and MNIST data sets are all publicly available.  We obtained the HELOC data set by accepting the data set usage license: (\url{https://community.fico.com/s/explainable-machine-learning-challenge?tabset-3158a=a4c37}).  

\subsection{Experimental Setup}
\label{sec:supp:exp_setup}

\subsubsection{CIFAR Multiple Baseline}
\label{sec:supp:mult_ref_cifar_setup}

\textbf{Model Hyperparameters:} The model explained is a CNN with the following sequence of layers: a convolutional layer with 32 filters of shape 3 by 3 with a ReLU activation, a convolutional layer with 32 filters of shape 3 by 3 with a ReLU activation, a max pooling layer with of size 2 by 2, a dropout layer with 0.25 probability, a convolutional layer with 64 filters of shape 3 by 3 with a ReLU activation, a convolutional layer with 64 filters of shape 3 by 3 with a ReLU activation, a max pooling layer with of size 2 by 2, a dropout layer with 0.25 probability, a dense layer with 512 nodes with ReLU activation, a dropout layer with probability 0.5, a dense output layer with softmax activation.  RMSprop with a learning rate of 0.0001 and decay of $1\times10^{-6}$ is used to optimize the network for a categorical cross entropy loss over 100 epochs with batch sizes of 32.  The test accuracy achieved by the model is 75.56\%.

\textbf{Experimental setup:} In Figure \ref{fig:mult_ref_cifar} we explain three explicands with black objects: a plane, a horse, and an ostrich.  For the single baseline attributions we utilize DeepSHAP with a single black image as the baseline.  For multiple baselines we utilize DeepSHAP with a baseline distribution of 1000 randomly sampled images from the training data set.  The feature attribution plots take the local feature attributions for the softmax output corresponding to the true label.  For simple visualization we take the absolute value of the attributions and average across channels to get a grayscale image which we plot after normalizing the attribution values between zero and one.  The pixel distributions are the number of pixels in the gray scale version of the explicand image that fell within ten equally sized gray scale value bins.  The attribution distribution is the sum of the attribution mass for the pixels in the original image that correspond to each gray scale value bin. 

\subsubsection{NHANES Multiple Baseline}
\label{sec:supp:mult_ref_nhanes_setup}

\textbf{Model Hyperparameters:} The model we explain is an MLP with four hidden layers with 100 nodes each.  The hidden layers have ReLU activation functions and dropout layers in between.  The final output node is a sigmoid activation trained to minimize binary cross entropy loss to optimize mortality classification.  RMSprop with a learning rate of 0.001 is used to optimize the network over 50 epochs with batch sizes of 128.  The test ROC achieved by the model is .872.  

\textbf{Experimental setup:} In Figure \ref{fig:mult_ref_nhanes}a, the explicand is a randomly chosen older male individual from the NHANES data set.  In the top force plot we show DeepSHAP attributions for the explicand with a baseline distribution of 1000 randomly chosen samples from the training set.  In the bottom force plot we show DeepSHAP attributions for the same explicand with a baseline distribution of 1000 randomly chosen samples from older (>60 years old) males from the training set.  In Figure \ref{fig:mult_ref_nhanes}b, the clusters are obtained by k-means clustering (k=8) the training data with only two features: age and sex.  In Figure \ref{fig:mult_ref_nhanes}c, the explicands are the older male cluster in the training data (n=1137).  We show two summary plots where the top are DeepSHAP attributions with a baseline distribution of 1000 randomly chosen samples from the training set and the bottom uses the older male cluster as a baseline distribution.  In Figure \ref{fig:mult_ref_nhanes}d, we perform an ablation test that ablates all explicands in the older male clusters according to either their most positive or most negative local feature attributions.  When ablating we impute by the mean feature value in the older male cluster.   Then we evaluate the model's prediction across all of the explicands after ablating features one at a time.  

\subsubsection{Gene set explanations}
\label{sec:supp:pathway_setup}

\textbf{Model hyperparameters:} We train two GBToost classifiers (an implementation of gradient boosting trees) to predict our binary phenotypes (Alzheimer's and breast cancer tumor stage) based on transcriptomic data.  The classifiers are trained with a learning rate of 0.3, a max tree depth of 6, and automatic heuristic tree construction.  We train with a validation set and 10 early stopping rounds.  For Alzheimer's classification we achieve a test ROC AUC of 0.959 and for breast cancer tumor stage classification we achieve a test ROC AUC of 0.932.

\textbf{Experimental setup:} The feature attributions for the tree model are obtained using Interventional Tree Explainer \cite{treeshap}.  These attributions correspond to the importance of each gene to the log odds of the output phenotypes.  In order to explain these attributions in terms of groups we utilize our group rescale rule to propagate the gene attributions to pathway attributions.  For Alzheimer's we fix the baseline distribution to be the training data set and for breast cancer which has more samples we fix a baseline distribution of 100 random samples from the training set for breast cancer.

\subsubsection{NHANES Loss explanations}
\label{sec:supp:loss_setup}

\textbf{Model hyperparameters:} We train an GBToost classifier (an implementation of gradient boosting trees) to predict our mortality based on epidemiological features.  The classifier is trained with a learning rate of 0.3, a max tree depth of 6, and automatic heuristic tree construction.  We train with a validation set and 10 early stopping rounds.  For the weight-shifted test set we achieve an ROC AUC of 0.860 and for the non-shifted test set we achieve an ROC AUC of 0.868.

\textbf{Experimental setup:} In Figure \ref{fig:loss_attributions}b, we generate output feature attributions using Interventional Tree Explainer and use DeepSHAP's generalized rescale rule to explain the loss in addition to the output.  The loss and output attributions are explained with respect to the same baseline distribution of 1000 random samples from the training set.  The loss attributions for positive labelled explicands and negative labelled explicands are very different, leading us to plot them as separate dependence plots.   In Figure \ref{fig:loss_attributions}c, we generate output and loss feature attributions as before.  For the ablation, we do a simplified univariate ablation where we impute the blood loss to the mean of the baseline distribution for samples selected based on largest loss attributions.  In Figure \ref{fig:loss_attributions}d, we perform an ablation test that ablates 1000 explicands from the training set according to either their loss or output local feature attributions.  When ablating we impute by the mean value of a given feature in the explicands.   Then we evaluate the model's prediction across all of the explicands after ablating features one at a time.

\subsubsection{MNIST Feature Extraction}
\label{sec:supp:feature_extraction_setup}

\textbf{Model hyperparameters:} We train a CNN model to classify all digits in MNIST.  The CNN model consists of a convolutional layer with 32 filters of size 3 by 3 with ReLU activation, a max pooling layer with pools of size 2 by 2, a convolutional layer with 64 filters of size 3 by 3 with ReLU activation, a max pooling layer with pools of size 2 by 2, a dense layer with 100 nodes and ReLU activation, and the dense output layer with 10 nodes and softmax activation.  We utilize categorical cross-entropy loss, an Adam optimizer with learning rate 0.001, and train for 10 epochs.  Then, in order to utilize the model to to extract higher level features from raw MNIST images, we remove the final output layer.  The GBToost model we train to predict zeros using the MNIST features has a max tree depth of 5, a learning rate of 0.5, and a binary logistic objective.  This model achieves a test accuracy of 0.998 for predicting zeros.

\textbf{Experimental setup:} In this experiment, we train a CNN model and use it to extract features that are fed into an GBT model.  In Figure \ref{fig:feature_extraction}a, we show the feature attributions for DeepSHAP and three model-agnostic approaches.  Each model-agnostic approach uses a number of samples which is set to 100,000.  All models utilize the same baseline distribution of 100 random images to explain the five images we selected.  In Figure \ref{fig:feature_extraction}b we report the runtime of these feature attribution approaches, and the ablation of the top 10\% of features.  In order to ablate the top ten positive (or negative) features, we simply select the pixels with the largest positive (or negative) attribution in the five explicands and impute them with the mean pixels across the baseline distribution.  We obtain confidence intervals by repeating this 20 times for different randomly selected sets of five explicands, where we enforce that at least one zero occurs within the five explicands, because it is the class of interest.

\subsubsection{HELOC Stacked Generalization}
\label{sec:supp:stacked_generalization_setup}

\textbf{Model hyperparameters:} In this experiment we train two base-learners.  One base learner is a GBT classifier that represents a fraud detection model which utilizes the following features: ``MSinceOldestTradeOpen'', ``MSinceMostRecentTradeOpen'', and ``NumTradesOpeninLast12M''.  Although this classifer represents a fraud detection model, we train it to predict risk using a learning rate of 0.1, 100 estimators, and a max tree depth of 3.  The other base learner is that represents a credit scoring model which utilizes the following features: ``AverageMInFile'', ``NumSatisfactoryTrades'', ``NumTrades60Ever2DerogPubRec'', ``NumTrades90Ever2DerogPubRec'', ``PercentTradesNeverDelq'', ``MSinceMostRecentDelq'', ``MaxDelq2PublicRecLast12M'', ``MaxDelqEver'', and ``NumTotalTrades''.  We train the base learner to predict risk using an MLP consisting of two hidden layers with 100 nodes and ReLU activations and an output layer consisting of a single dense node with sigmoid activation.  The binary cross-entropy loss function is optimized using stochastic gradient descent and a learning rate of 0.005.  The meta learner is a GBT classifier that represents a bank risk prediction model which utilizes the remaining HELOC features in addition to the outputs of the two base learners.  The meta learner uses the following hyperparameters: learning rate of 0.1, 100 estimators, and a max tree depth of 3.

\textbf{Experimental setup:}
In Figure \ref{fig:model_stack}a, we first train a GBT and MLP base-learner on disjoint subsets of features from the training data.  Then we generate scores for the training data and append it to the remaining features.  The remaining features and consumer scores are used to train a final GBT model.  Finally, we evaluate the final GBT on a held out test data set.  In Figure \ref{fig:model_stack}b, we create explanations for the meta model using interventional Tree Explainer for the GBT.  Then in \ref{fig:model_stack}c, we use the generalized rescale rule to propagate the attributions back through the base-learners (GBT and MLP) to obtain attributions in the original feature space.

\subsubsection{NHANES Stacked Generalization}
\label{sec:supp:nhanes_stacked_generalization_setup}

\textbf{Model hyperparameters:} In this experiment we train five base-learners - MLPs.  The MLPs consist of two hidden layers with 100 nodes and ReLU activations.  The output layer is a single dense node with sigmoid activation.  The binary cross-entropy loss function is optimized via stochastic gradient descent with a learning rate of 0.005.  Then we train a two meta-models that use the outputs of the MLPs as inputs.  The first is a logistic regression model with an L2 penalty and regularization strength of 1.  The second is a gradient boosted trees classifier with a learning rate of 0.1, 100 estimators, and a max tree depth of 3.  

\textbf{Experimental setup:}
In Figure \ref{fig:supp:stacked_gen}a, we first train five MLP base-learners on training data.  Then we embed held out validation data using the predictions of the five MLP base-learners.  This embedded validation data is used to train the logistic regression and gradient boosting trees models.  Finally, all models are evaluated on a held out test data set.  In Figure \ref{fig:supp:stacked_gen}b, we create meta-level explanations using interventional Shapley value attributions for the linear models (average voting and logistic regression) \cite{chen2020true}, and interventional Tree Explainer for the GBT.  Then in \ref{fig:supp:stacked_gen}c, we use the generalized rescale rule to propagate the attributions back through the base-learner MLPs to obtain attributions in the original feature space.

\subsection{Feature attribution plots}
\label{sec:supp:feat_attr_plots}

In this section we describe a number of plotting techniques for conveying information about local feature attributions.  These plots were first introduced in \cite{treeshap}.

\subsubsection{Force plots}
\label{sec:supp:force_plot}

Force plots show the feature attributions for a single explicand in terms of how they drive the model's prediction for the explicand away from the average model prediction across the baseline distribution.  The width of the bars indicate the feature attribution value with red indicating a positive affect and blue indicating a negative one.  The features corresponding to the largest bars are below with their actual values for the explicand.

\subsubsection{Dependence plots}
\label{sec:supp:dependence_plot}

Dependence plots show the feature attributions for many explicands for a single feature.  Every point corresponds to a single explicand where the x-axis is the value of the feature and the y-axis is the the feature attribution value.  The coloring of the points often denotes the value of a separate feature.

\subsubsection{Summary plots}
\label{sec:supp:summary_plot}

Summary plots show the feature attributions for many explicands and multiple features.  Summary plots stack multiple subplots plots for each individual feature.  For the feature plots, every point corresponds to a single explicand where the x-axis is the feature attribution value and the y-axis is vertical dispersion representing the frequency of samples with a particular feature attribution value.  Finally, the color of each point represents the normalized feature value, with red representing a high value and blue representing a low one.  Intermediary feature values are interpolations between red and blue.

\subsection{Results}

% \subsubsection{Ablation tests}

% \hugh{Add the simulated data ablation tests}

\subsubsection{Additional CIFAR bias examples}
\label{sec:supp:mult_ref_cifar_ig_ime}

We present additional examples of bias for IME and integrated gradients in Figures \ref{fig:supp:bias_cifar_ime} and \ref{fig:supp:bias_cifar_intgrad}.

\begin{figure}[!ht]
\includegraphics[width=.8\textwidth]{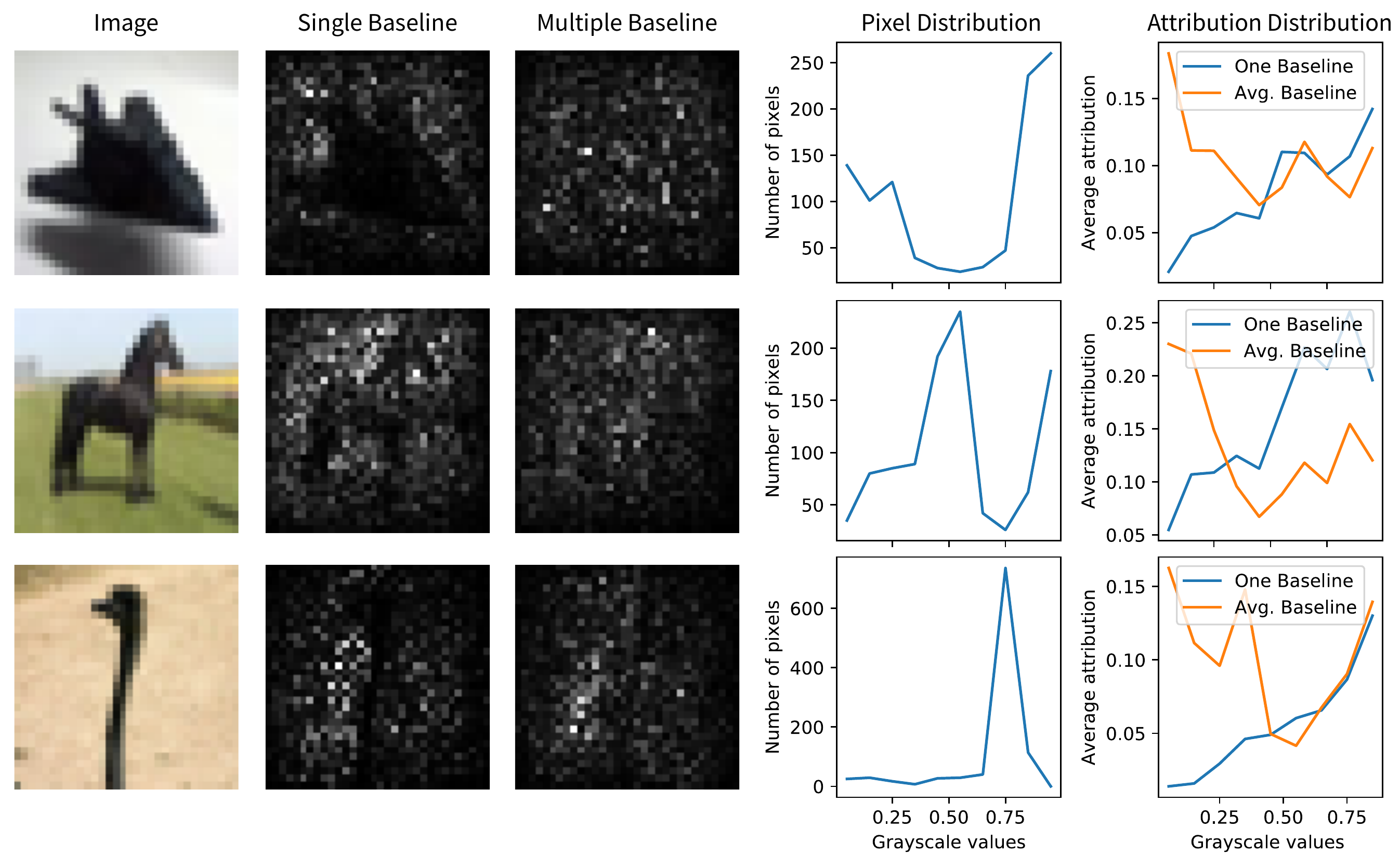}
\centering
\caption{Demonstrating bias of a single baseline for IME.}
\label{fig:supp:bias_cifar_ime}
\end{figure}

\begin{figure}[!ht]
\includegraphics[width=.8\textwidth]{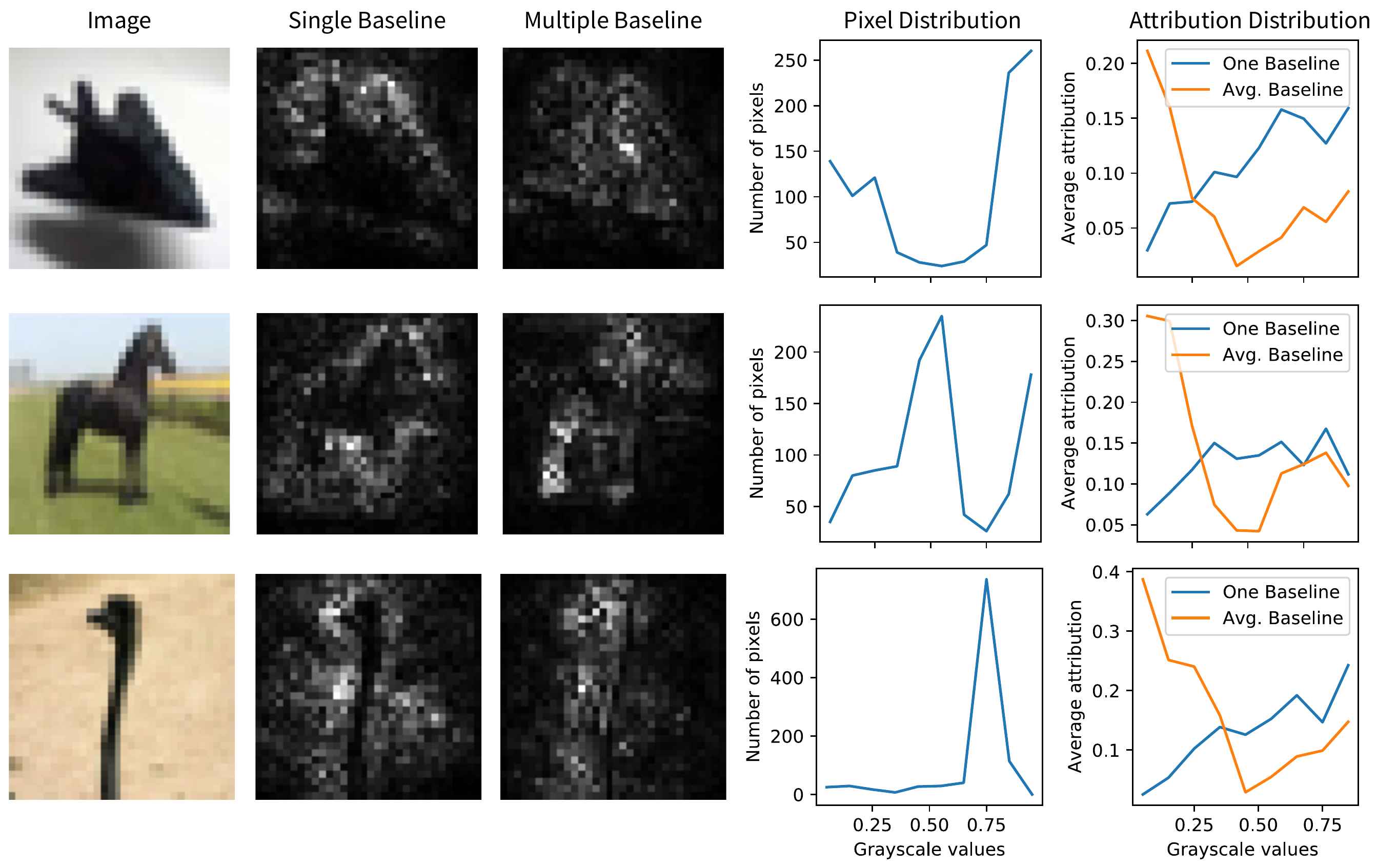}
\centering
\caption{Demonstrating bias of a single baseline for integrated/expected gradients.}
\label{fig:supp:bias_cifar_intgrad}
\end{figure}

\subsubsection{Probability vs. log-odds explanations}
\label{sec:supp:nhanes_prob_vs_logodds}

In Figure \ref{fig:supp:summary_prob_lodds} we illustrate the difference between explanations in log-odds versus probability space using attributions obtained from rescaling the log-odds explanations provided by TreeSHAP.

\begin{figure}[!ht]
\includegraphics[width=.7\textwidth]{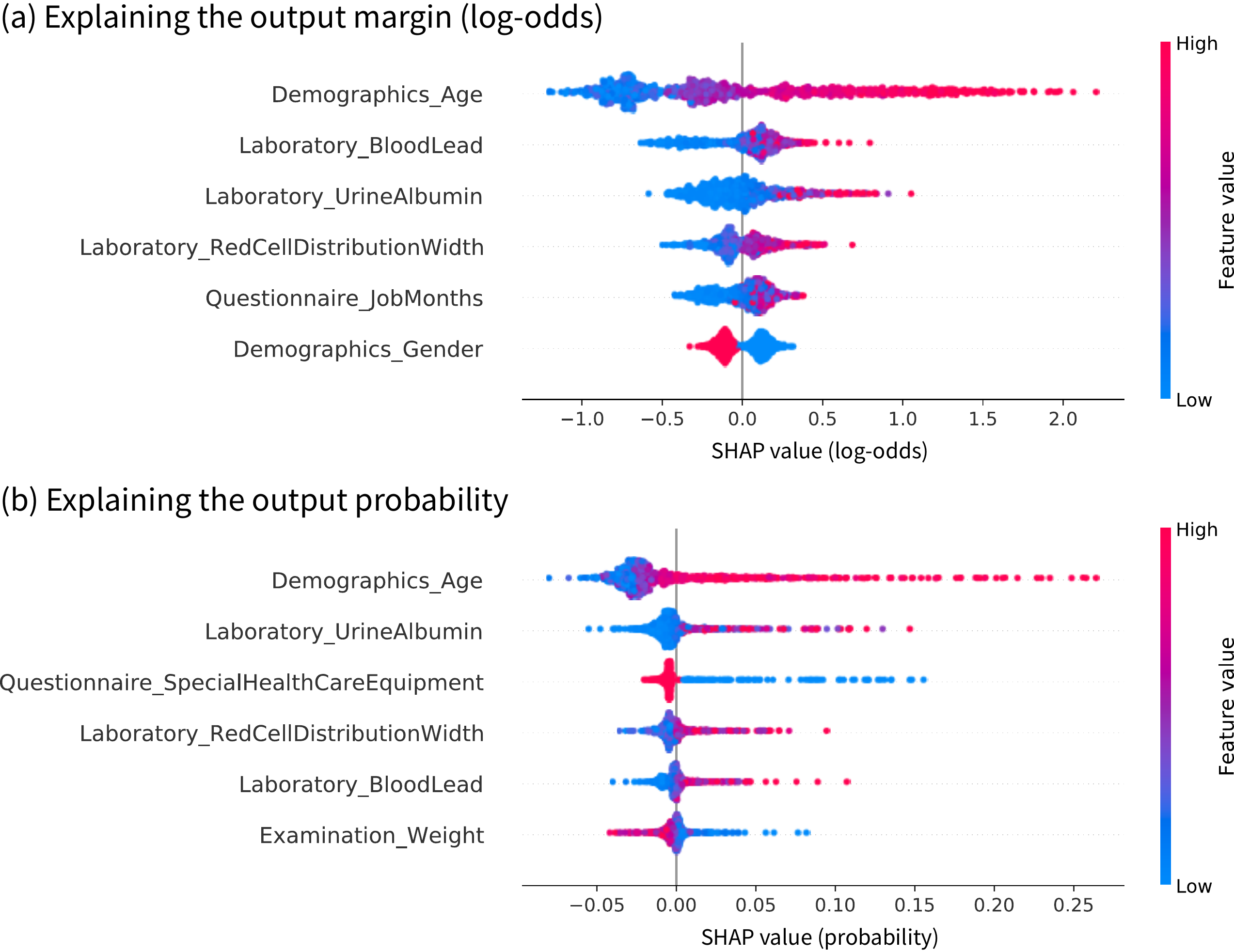}
\centering
\caption{The summary plot for the log-odds model output differs to the summary plot for the probability output in terms of ordering of important features.  This is to be expected because of the non-linear mapping between log-odds and probability.  Often times, it can be useful to communicate scientific findings in terms of the probability output of the model, although the log-odds output is also natural as it is the output margin.}
\label{fig:supp:summary_prob_lodds}
\end{figure}

\subsubsection{Additional gene sets}
\label{sec:supp:additional_gene_sets}

We present attributions aggregated by the Reactome canonical pathway gene set and the Biological Process gene ontology gene set in Figure \ref{fig:supp:additional_pathways}.

\begin{figure}[!ht]
\includegraphics[width=\textwidth]{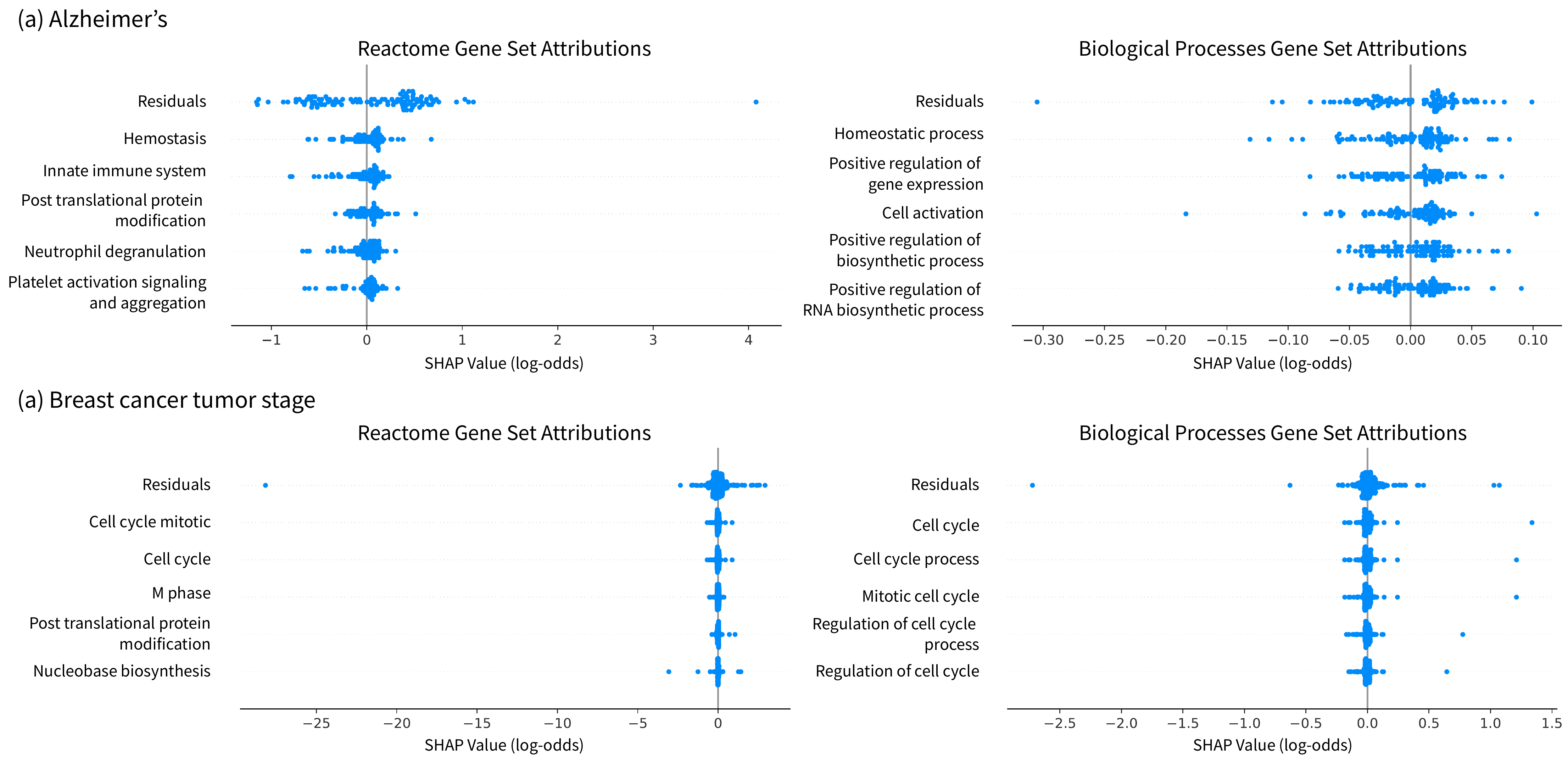}
\centering
\caption{Additional gene set attributions for the Reactome canonical pathway gene set and the Biological Process gene ontology gene set. Analogous to the attributions in Figure \ref{fig:pathway_attributions}}
\label{fig:supp:additional_pathways}
\end{figure}

% \subsubsection{Bias by size of gene set}
% \label{sec:supp:gene_set_size_bias}

% \hugh{Todo}

\subsubsection{Improved predictive performance of feature extraction}
\label{sec:supp:feature_extraction_performance}

In Figure \ref{fig:supp:compare_xgb_to_cnn} we demonstrate the efficacy of deep feature extraction fed into a tree model for MNIST.

\begin{figure}[!ht]
\includegraphics[width=.6\textwidth]{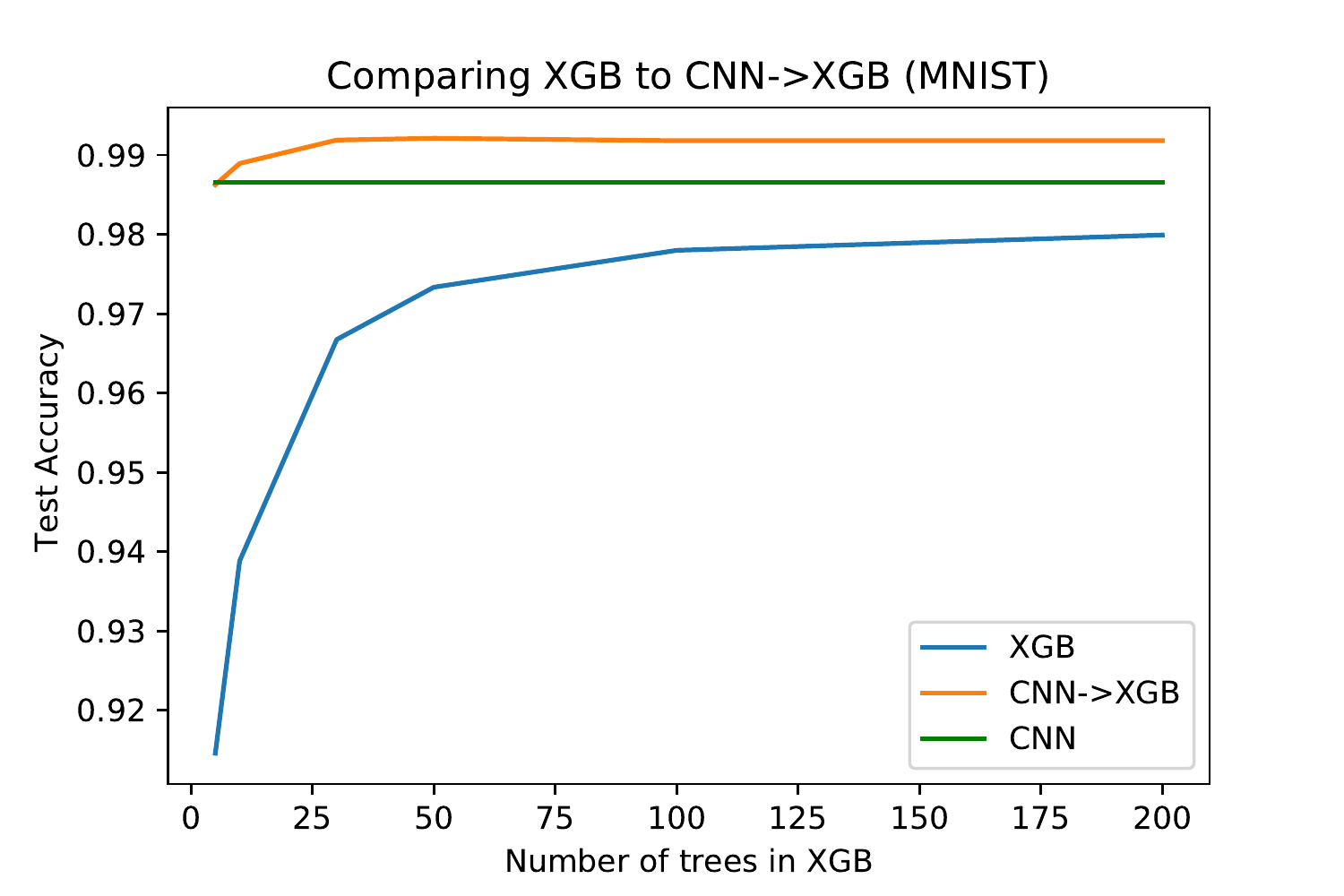}
\centering
\caption{Investigating the utility of CNN feature extraction in MNIST.  We compare a CNN on the raw digits to an GBT model trained on the raw digits to an GBT model trained to classify digits on the basis of the features extracted by the trained CNN.  We vary the number of estimators in GBToost to investigate how well underparamterized trees classify digits with different features.  Overall, using GBT models with the features extracted from the CNN yield much higher accuracy for trees with the same number of estimators.}
\label{fig:supp:compare_xgb_to_cnn}
\end{figure}

\subsubsection{Stacked generalization}
\label{sec:supp:stacked_gen}

\begin{figure}[!ht]
\includegraphics[width=\textwidth]{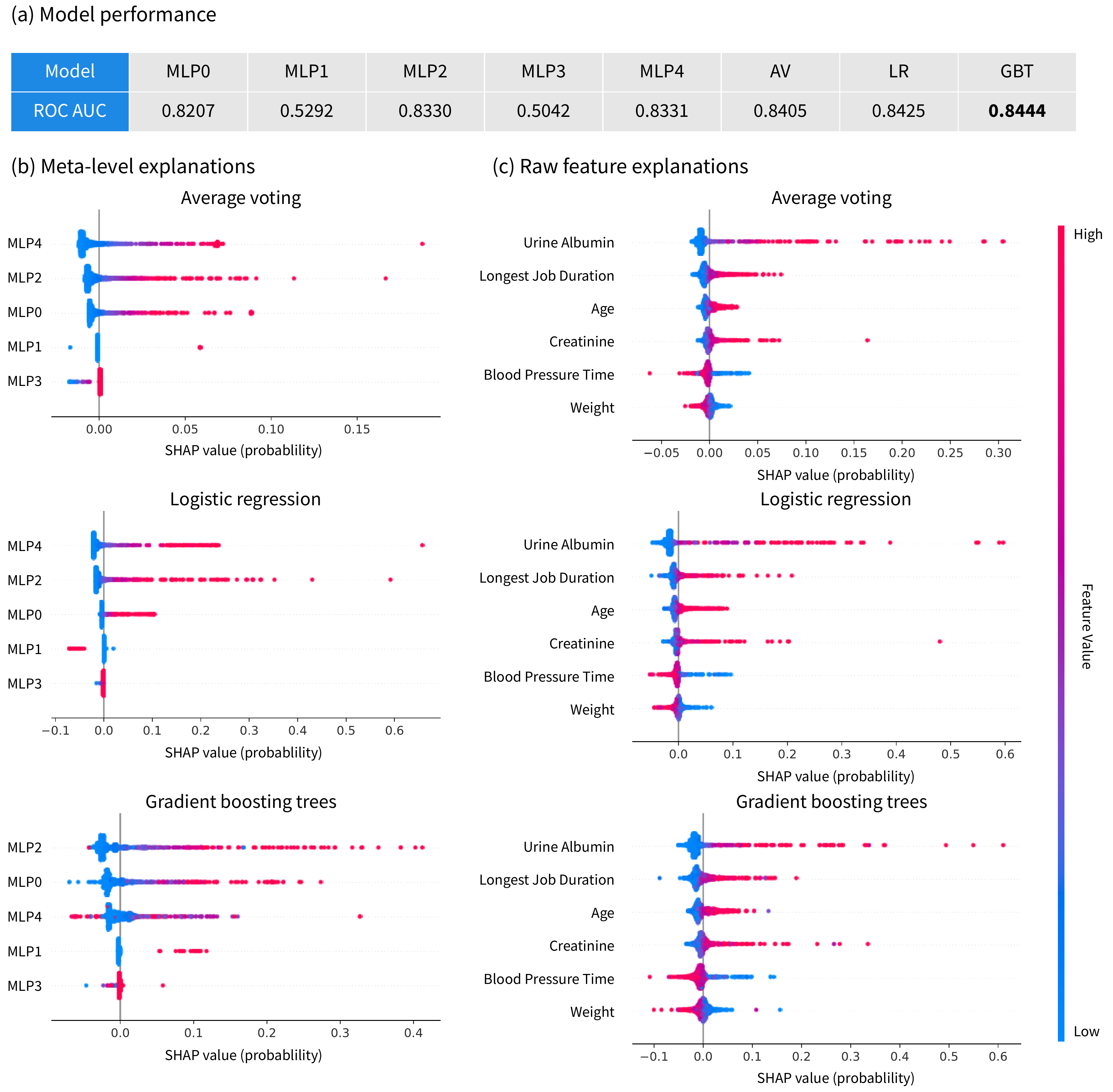}
\centering
\caption{\textbf{Explaining stacked generalization by looking at meta-level and raw feature explanations.} (a) The test set performance of the five MLP models and three meta-models that use make predictions based on the MLP models' predictions.  (b) Intermediary explanations for the meta-models that assign credit based on which MLP was important to the meta-model's prediction.  (c) Raw feature explanations obtained by propagating the credit for each meta-model in (b) to the original feature space.  (b) and (c) show summary plots (Appendix Section \ref{sec:supp:summary_plot}).}
\label{fig:supp:stacked_gen}
\end{figure}

We compare five bagged MLP base-learners (feature attributions in Figures \ref{fig:baselearner0}-\ref{fig:baselearner4}) and three meta-learners (average voting, logistic regression, and gradient boosting trees) that use the base-learners' predictions as features for NHANES (1999-2014) mortality prediction with performance in Figure \ref{fig:supp:stacked_gen}a.  We see that average voting outperforms any individual MLP and is improved upon by a non-uniform weighting scheme (logistic regression).  Finally, stacked generalization with a gradient boosted tree meta-model outperforms both linear approaches.  

Since our framework enables attributions that satisfy efficiency at each layer, we obtain the importance each meta model assigns to each base-learner (Figure \ref{fig:supp:stacked_gen}b), which is much harder to do for model-agnostic methods because it will require separately estimating the importance for each layer.  Although the average voting scheme assigns equal importance to each base model, each MLP's predictions are different, leading to the different shapes in the summary plots.  In contrast, the logistic regression model downweights MLP0 and MLP3 and primarily relies on MLP2 and MLP4 which achieved the highest performance.  The gradient boosting tree model uses the base-learners in a non-linear fashion.  For MLP4, high predictions actually decreases the overall prediction of the meta-learner.  These meta-level explanations reveal novel insights that explanations in the original feature space would not.  Finally, we can also propagate the meta-level explanations back to the original input space and verify that most models give similarly reasonable feature attributions in Figure \ref{fig:model_stack}c.  

\begin{figure}[!ht]
\includegraphics[width=.6\textwidth]{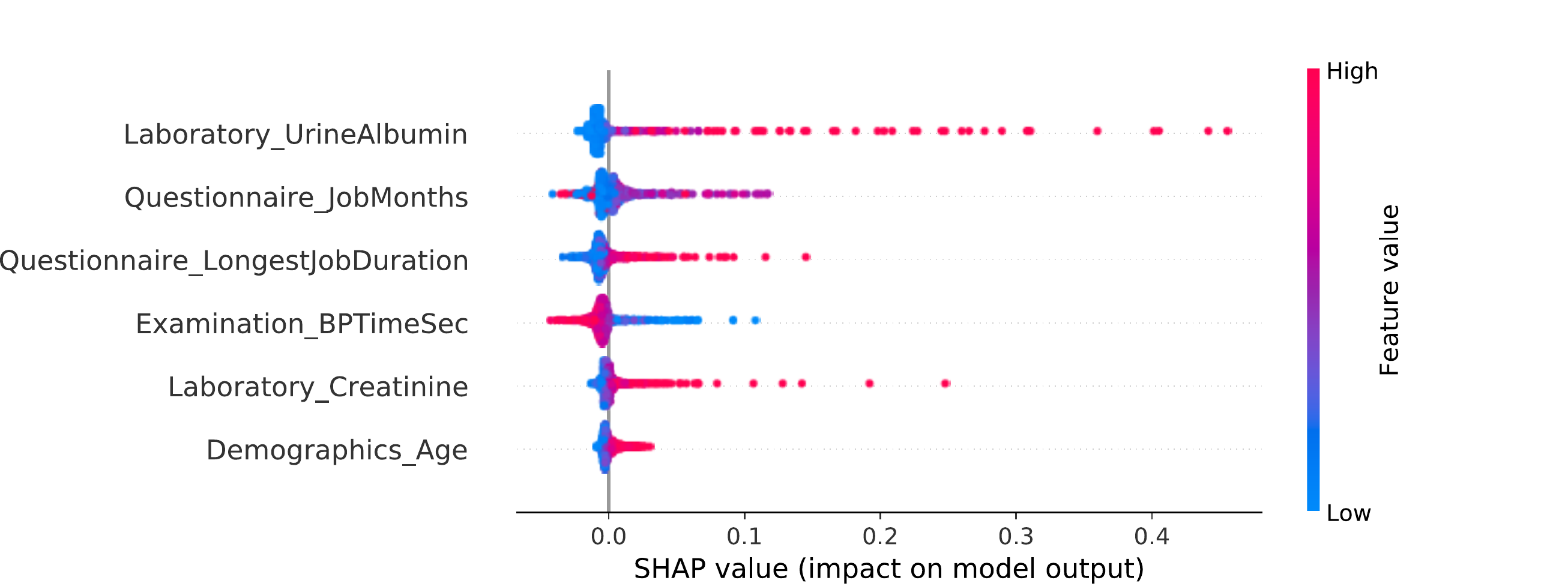}
\centering
\caption{Feature attributions for base learner MLP0.}
\label{fig:baselearner0}
\end{figure}

\begin{figure}[!ht]
\includegraphics[width=.6\textwidth]{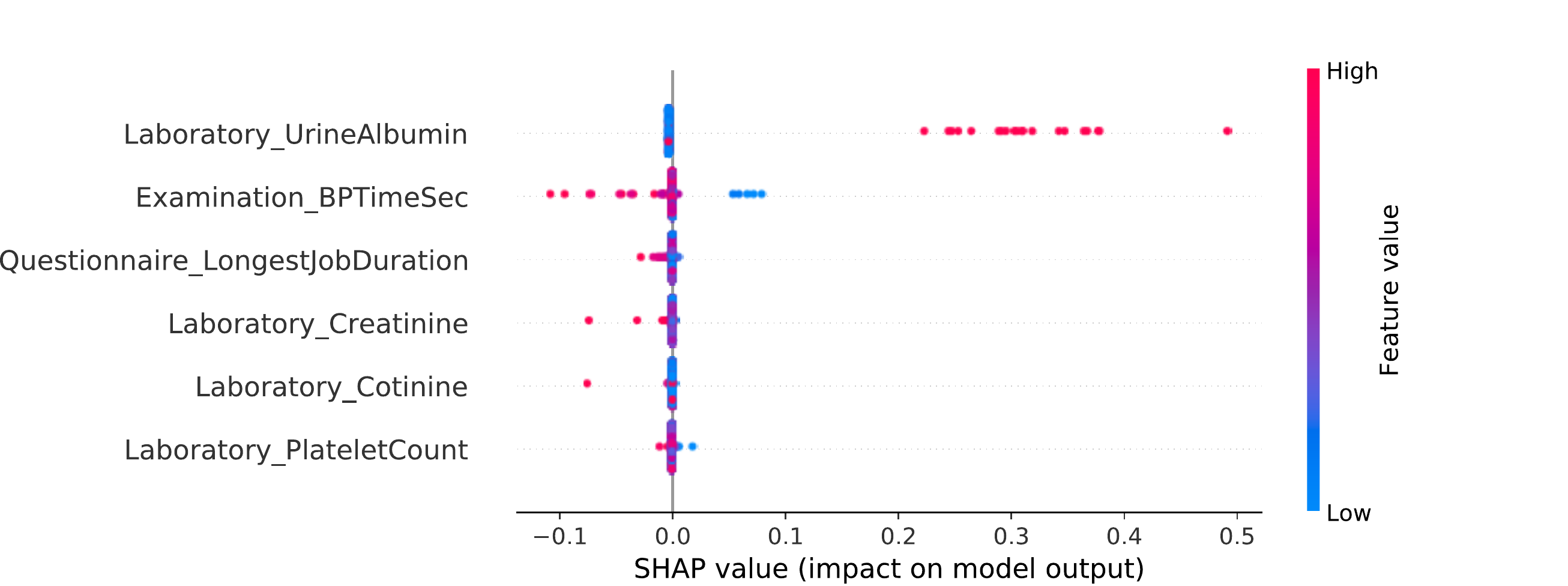}
\centering
\caption{Feature attributions for base learner MLP1.}
\end{figure}

\begin{figure}[!ht]
\includegraphics[width=.6\textwidth]{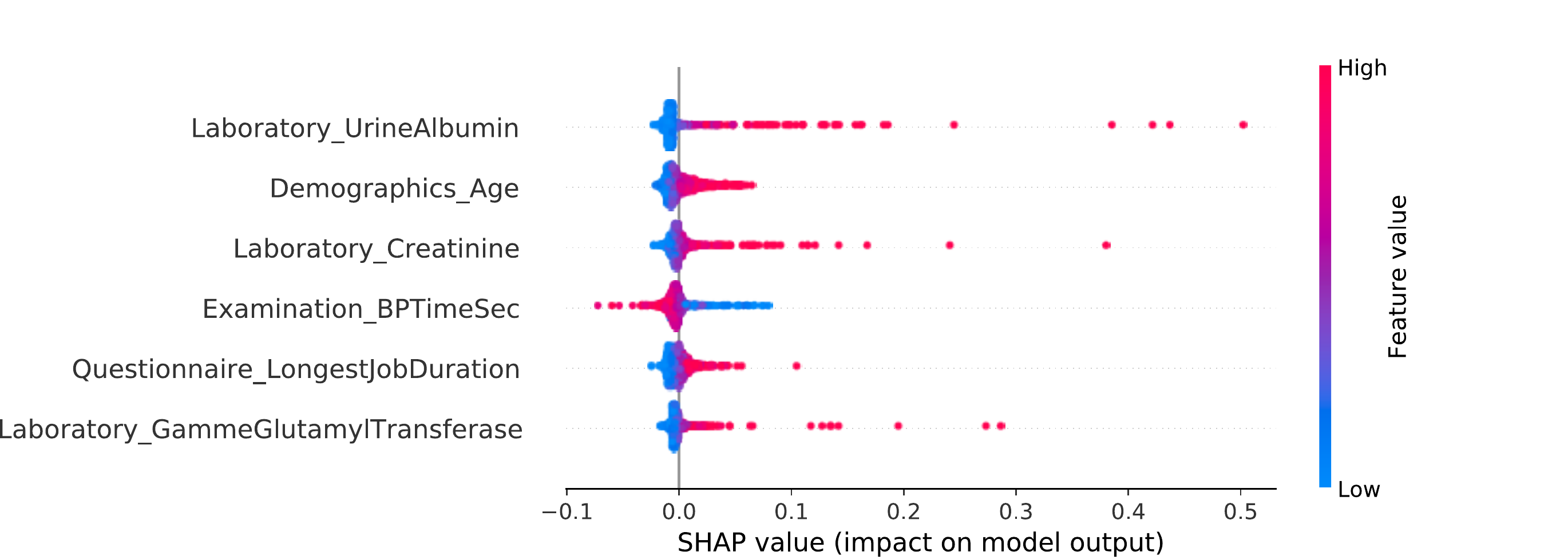}
\centering
\caption{Feature attributions for base learner MLP2.}
\end{figure}

\begin{figure}[!ht]
\includegraphics[width=.6\textwidth]{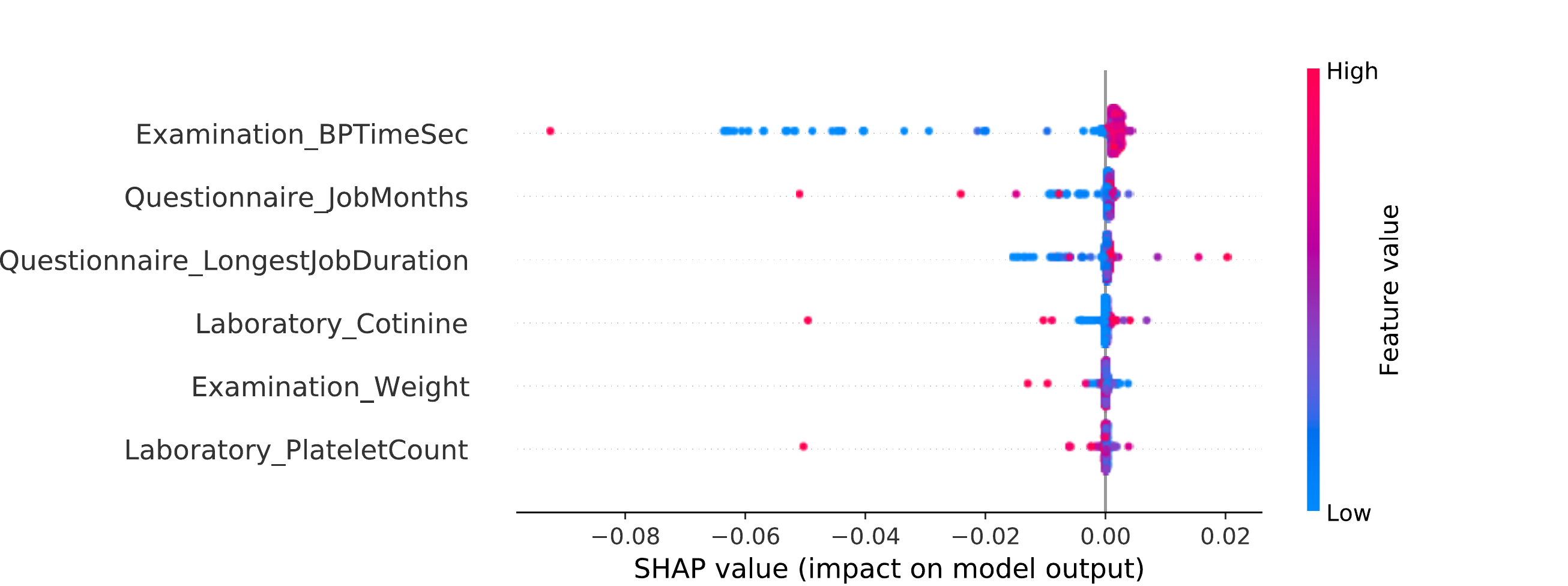}
\centering
\caption{Feature attributions for base learner MLP3.}
\end{figure}

\begin{figure}[!ht]
\includegraphics[width=.6\textwidth]{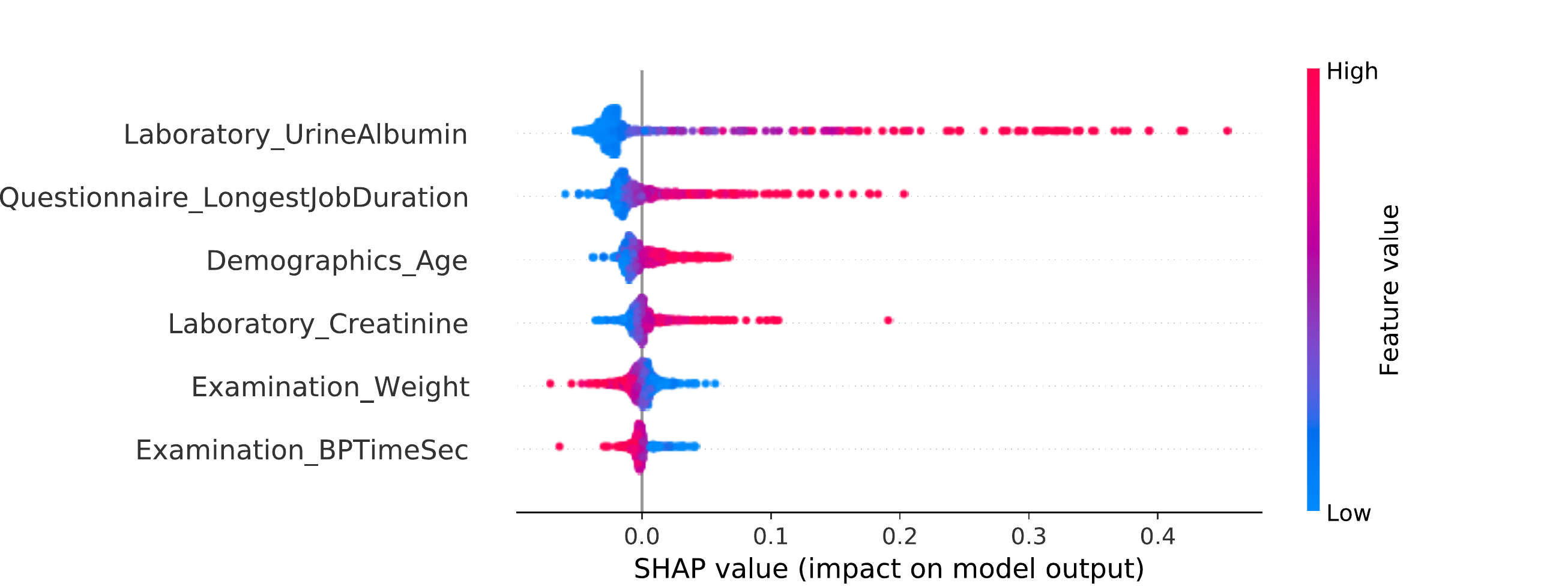}
\centering
\caption{Feature attributions for base learner MLP4.}
\label{fig:baselearner4}
\end{figure}

\subsection{Methods}

\subsubsection{Shapley value axioms}
\label{sec:supp:shapley_axioms}

The Shapley values satisfy a number of desirable properties in terms of the set function $v$.  It is uniquely defined by three axioms:
\begin{itemize}
    \item \textbf{Efficiency:} The sum of the Shapley values for each player equals the value of the game with the set of all players (the grand coalition):
    \begin{equation}
        \sum_{i=1}^m \phi_i(v)=v(M) - v(\emptyset)
    \end{equation}
    \item \textbf{Monotonicity:} If a player $i$ always increases game $v_1$’s value more than they would company $v_2$ for all possible remaining sets of players, then $i$’s attribution for $v_1$ should be greater than or equal to their attribution in $v_2$:
    \begin{equation}
        v_1(S\cup {i}) -v_1(S) \geq v_2(S\cup {i}) - v_2(S) \forall S \subseteq N\setminus {i} \implies \phi_i(v_1)\geq \phi_2(v_2)
    \end{equation}
    \item \textbf{Missingness:} Employees $i$ that don’t help or hurt the company’s profit must have no attribution:
    \begin{equation}
        v(S\cup {i})=v(S)\forall S \subseteq N\setminus {i} \implies \phi_i(v)=0
    \end{equation}
\end{itemize}

While the above three axioms determine the Shapley values as a unique solution concept for credit allocation, the Shapley values have a number of additional desirable properties:
\begin{itemize}
    \item \textbf{Symmetry:} If two players have the same marginal impact for all subsets, then they should have the same Shapley value:
    \begin{equation}
        v(S\cup {i})=v(S\cup {j}) \forall S \subseteq N\setminus {i,j} \implies \phi_i(v) = \phi_j(v)
    \end{equation}
    \item \textbf{Linearity:} The Shapley values for a linear combination of games is equal to the linear combination of Shapley values for each game:
    \begin{equation}
        \phi_i(v_1+v_2) = \phi_i(v_1)+\phi_i(v_2)
    \end{equation}
    and
    \begin{equation}
        \phi_i(av) = a\phi_i(v)
    \end{equation}
\end{itemize}

\subsubsection{Baseline distribution proof for interventional Shapley values}
\label{sec:supp:int_baseline_dist_proof}

\begin{proof}
Define $D$ to be the data distribution, $N$ to be the set of all features, and $f$ to be the model being explained.  Additionally, define  $\mathcal{X}(x,x',S)$ to return a sample where the features in $S$ are taken from $x$ and the remaining features from $x'$.  Define $C$ to be all combinations of the set $N \setminus \{i\}$ and $P$ to be all permutations of $N \setminus \{i\}$.  Starting with the definition of SHAP values for a single feature: $\phi_i(x)$
\begin{align*}
&= \sum_{S\in C } W(|S|,|N|)(\mathbb{E}_{D}[f(X)|x_{S\cup \{i\}}] {-} \mathbb{E}_{D}[f(X)|x_{S}])\\
&=\frac{1}{|P|}\sum_{S\subseteq P} \mathbb{E}_\mathcal{D}[f(x)|\text{do}(x_{S \cup \{i\}})] {-} \mathbb{E}_\mathcal{D}[\text{do}(f(x)|x_{S})]\\
&= \frac{1}{|P|}\sum_{S\subseteq P}\frac{1}{|D|}\sum_{x'\in D} f(\mathcal{X}(x,x',S\cup \{i\})) {-} f(\mathcal{X}(x,x',S))
\\
&= \frac{1}{|D|}\sum_{x'\in D} \underbrace{\frac{1}{|P|}\sum_{S\subseteq P} f(\mathcal{X}(x,x',S\cup \{i\})) {-} f(\mathcal{X}(x,x',S))}_\text{single baseline SHAP value}
\end{align*}
where the second step depends on an interventional conditional expectation \cite{janzing2019feature} which is very close to Random Baseline Shapley in \cite{sundararajan2020themanyshapleyvalues}).
\end{proof}

\subsubsection{Generalized rescale rule is exact for linear models}
\label{sec:supp:linear_series_of_models}

We define a series of models composed of linear functions: $f_k(x)=B^k \cdots B^2 B^1x$ where $B^i\in \mathbb{R}^{o_i\times m_i}$, $m_1=m$, and $o_k=1$.  If we define $\hat{\phi}$ to return Interventional Shapley values for linear models ($\phi(f,x^e,x^b)=\beta (x^e - x^b)$ where $x^e$ and $x^b$ are the inputs to the linear model and $f(x)=\beta x$ \cite{chen2020true}).  Then, the generalized rescale rule gives:
\begin{align}
\psi^k&=B^k (f_{k-1}(x^e)-f_{k-1}(x^b))\\
\psi^i&=B^i (f_{i-1}(x^e)-f_{i-1}(x^b)) \big(\psi^{i+1}\oslash(f_i(x^e)-f_i(x^b))\big),\text{ }i\in 1,\cdots, k-1
\end{align}
Therefore,
\begin{equation}
\phi_i(f_k,x^e,x^b)=B^k \cdots B^2 B^1 (x^e - x^b)
\end{equation}
This coincides with the interventional Shapley values for $f_k(x)$ since the composition of linear models is linear.

\subsubsection{Explaining ensembles of models}
\label{sec:supp:ensembles_of_models}

Explaining ensembles of models is straightforward for Shapley values, because of the linearity property (Appendix Section \ref{sec:supp:shapley_axioms}).  In particular, bagged and boosted ensembles are linear functions of individual models:
\begin{equation}
f(x) = \beta_1 f_1(x) + \cdots \beta_k f_k(x),
\end{equation}
where bagged ensembles have $\beta_i = \frac{1}{k}$ and boosted ensembles have $\beta_i = 1$.  In order to explain these models, it suffices to explain the ensemble model $f$ with:
\begin{equation}
\phi_i(f) = \beta_1 \phi_i(f_1) + \cdots \beta_k \phi_k(f_k)
\end{equation}
Furthermore, explaining linear meta-models for stacked ensembles is also encompassed by the linearity property of Shapley values.  In contrast, in order to explain stacked ensembles with mixed model types as in Section \ref{sec:stacked_generalization}, we employ our generalized rescale rule.

\end{document}